\DeclareMathOperator{\Tr}{Tr}
\newcommand{\bx}{\mathbf{x}}
\newcommand{\bxF}{\mathbf{x}^\mathcal{F}}
\newcommand{\bu}{\mathbf{u}}
\newcommand{\bz}{\mathbf{z}}
\newcommand{\by}{\mathbf{y}}
\newcommand{\calV}{\mathcal{V}}
\newcommand{\calF}{\mathcal{F}}
\newacronym{map}{MAP}{maximum a posteriori}
\newacronym{minlp}{MINLP}{mixed-integer nonlinear program}
\newacronym{gmm}{GMM}{Gaussian mixture model}
\newacronym{bgmm}{BGMM}{Bayesian Gaussian mixture model}
\newacronym{baron}{BARON}{Branch-and-Reduce Optimization Navigator}
\newacronym{gbd}{GBD}{generalized Benders' decomposition}
\newacronym{lda}{LDA}{Latent Dirichlet allocation}
\newtheorem{theorem}{Theorem}
\newtheorem{corollary}{Corollary}
\newtheorem{lemma}{Lemma}
\title{Maximum a Posteriori Inference for Factor Graphs via Benders' Decomposition
\thanks{\textit{\underline{Citation}}: 
\textbf{Authors. Title. Pages.... DOI:000000/11111.}}

}
\author{
  Harsh Vardhan Dubey  \\
  Department of Mathematics \& Statistics \\
  University of Massachusetts, Amherst \\
  Amherst\\
  \texttt{hdubey@umass.edu} \\
     \And
  Ji Ah Lee \\
  Department of Mathematics \& Statistics \\
  University of Massachusetts, Amherst \\
  Amherst\\
  \texttt{jlee@umass.edu} \\
   \And
  Patrick Flaherty \\
  Department of Mathematics \& Statistics \\
  University of Massachusetts, Amherst \\
  Amherst\\
  \texttt{pflaherty@umass.edu} \\
}
\begin{document}
\maketitle

\begin{abstract}
Many Bayesian statistical inference problems come down to computing a maximum a-posteriori (MAP) assignment of latent variables.
Yet, standard methods for estimating the MAP assignment do not have a finite time guarantee that the algorithm has converged to a fixed point.
Previous research has found that MAP inference can be represented in dual form as a linear programming problem with a non-polynomial number of constraints.
A Lagrangian relaxation of the dual yields a statistical inference algorithm as a linear programming problem.
However, the decision as to which constraints to remove in the relaxation is often heuristic.
We present a method for maximum a-posteriori inference in general Bayesian factor models that sequentially adds constraints to the fully relaxed dual problem using Benders' decomposition.
Our method enables the incorporation of expressive integer and logical constraints in clustering problems such as must-link, cannot-link, and a minimum number of whole samples allocated to each cluster.
Using this approach, we derive MAP estimation algorithms for the Bayesian Gaussian mixture model and latent Dirichlet allocation.
Empirical results show that our method produces a higher optimal posterior value compared to Gibbs sampling and variational Bayes methods for standard data sets and provides certificate of convergence.
\end{abstract}

\keywords{  Bayesian \and a posteriori \and Benders' decomposition \and graphical model \and constrained optimization \and mixed integer programming}

\section{Introduction}
In many applications, the maximum a-posteriori (MAP) assignment is the primary object of inference.
For example, decoding error-correcting codes, identifying protein coding regions from DNA sequence data, part-of-speech tagging, and image segmentation~\citep{wainwright2008graphical}.
To address these applications, we have two objectives for the optimal MAP estimation algorithm.
First, it should produce an estimate that is in support of the actual posterior distribution rather than an approximation that must then be rounded to a feasible estimate.
Second, it should accommodate ``hard'' constraints such as logical constraints or must-link constraints that are enforced regardless of the data.
Third, it should provide a finite-time certificate of convergence - that is, it should report that it has converged to a local optimum.
Even without these requirements, however, MAP inference is known to be NP-hard except for graphs of special structure~\citep{shimony1994finding}.
Therefore, we expect to pay a computational price for the convenience of these conditions, but we aim to minimize the computational impacts and retain the ability to solve problems of a size that are of practical importance.

The first condition on the estimation algorithm is that the MAP estimate is a feasible point in the true posterior distribution.
In the case of clustering models, the assignment variables are discrete and the MAP estimation problem is typically a mixed-integer nonlinear programming problem~\citep{murphy2012machine}.
Simulated annealing~\citep{szu1987nonconvex} and genetic algorithms~\citep{hajela1990genetic} converge to the global optimum for non-convex problems, but the convergence guarantees are generally asymptotic rather than finite-time, and assessing convergence is challenging~\citep{brooks1998general}.  
\citep{zymnis2008mixed} addresses the issue of the non-convexity of the MAP problem domain by approximating the domain with a convex hull, solving the nonlinear program over a continuous domain, and then rounding to a feasible MAP estimate.
Randomized rounding can significantly improve the expected performance in some non-convex problems~\citep{goemans1995improved}, however, rounding can lead to a suboptimal solution that is distant from the true MAP value~\citep{land1960automatic}.
Our focus in this work is on the optimal MAP estimate for the particular data set in hand rather than a high probability of good performance across data sets.
     
The second condition we would like to incorporate into the estimation algorithm is the ability to add both hard constraints to restrict the domain and soft constraints to guide the optimal solution based on prior information.
\citep{baumann2020binary} shows that incorporating hard constraints can dramatically improve both accuracy (comparing to known labels) and computational time because the search space is significantly reduced.
Other clustering algorithms can adaptively incorporate user feedback as hard constraints which can also improve clustering accuracy and timing results~\citep{angel2022cluster}.
Most clustering algorithms, however, incorporate must-link and cannot-link constraints as soft constraints so as to simplify the search space~\citep{davidson2005clustering}.
Therefore, incorporating both hard and soft constraints can improve clustering performance and computational efficiency. 

The third condition is that our algorithm should have a finite-time certificate of convergence.
Gradient-based clustering algorithms have good computational timing performance~\citep{murphy2012machine} and can have good average population performance~\citep{balkrishnan2017EM}. 
However, such clustering algorithms are not able to distinguish between convergence towards a point near the global optimum, or to a point near a bad local optimum.
Moreover, they also require suitable initialization values which are not always available for a high dimensional non-convex problem.
Therefore, developing an algorithm that can guarantee convergence even to a local optimum in finite time becomes important.

The rest of the paper is structured as follows. 
In section~\ref{sec:gbd_sec} we derive Generalized Benders’ Decomposition for factor graphs, present a theorem outlining when our proposed algorithm yields an $\epsilon$-global optimal value, and detail a step-wise implementation of our algorithm. 
In section~\ref{sec:bgmm_sec}  we derive and implement our algorithm for the Bayesian Gaussian Mixture Model (BGMM) and present experimental results on three standard data sets by comparing our proposed algorithm to other standard Bayesian inference methods. 
Finally, in section~\ref{sec:lda_sec} we derive and implement our algorithm on the Latent Dirichlet Allocation(LDA) model and present experimental results on the 20 newsgroups data set by comparing our proposed algorithm to other standard Bayesian inference methods. 
The remainder of this introduction is concerned with the general MAP inference problem setup for factor graphs and related work on inference algorithms for factor graphs.

\subsection{Problem Setup}
The maximum a-posteriori (MAP)  estimate is a solution to
\begin{equation}
    \label{eqn:map1}
	\max_\bx \log p(\bx | \by; \phi),
\end{equation}
where $\bx$ is the vector of latent variables, $\by$ is the observed data, and $\phi$ is the vector of parameters of a Bayesian model, $p$.
Latent variables, $\bz$, that are in the model, but not subjects of inference are marginalized as $p(\bx | \by; \phi) = \int_{\bz} p(\bx,\bz | \by; \phi) d\bz$.

The log posterior distribution can always be decomposed into the sum of local interactions~\citep{sontag2010introduction},
\begin{equation}
    \log p(\bx | \by; \phi) = \sum_{v \in \mathcal{V}} \theta_v(x_v) + \sum_{f \in \mathcal{F}} \theta_f(\bx_f),
\end{equation}
where $\calV$ is the set of latent variables, $\calF$ is the set of factors, and $\bx_f$ is the subset of the latent variables that are involved in factor $f$.
The factorization separates the joint log density function into singleton functions, $\theta_v(x_v)$, which only depend on a single element of $\calV$, and factor functions, $\theta_f(\bx_f)$, which depend on a subset $f \in \calV$ of latent variables.
The functions $\theta_v$ and $\theta_f$ incorporate the fixed data, $\by$, and parameters, $\phi$.
The resulting MAP inference problem in factorized form is
\begin{equation}
	\label{eqn:map}
	\text{MAP}(\theta) = \max_\bx \sum_{v \in \mathcal{V}} \theta_v(x_v) + \sum_{f \in \mathcal{F}} \theta_f(\bx_f).
\end{equation}

To construct the dual decomposition~\citep{sontag2010introduction}, the model is augmented by duplicating all the latent variables that are involved in factors $\bx_f$ $\forall f \in \calF$; we denote this set of factor variables $\bx^\mathcal{F} = \{\bx_f : f \in \calF \}$, where $\bx_f = \{x_v : x_v \in f\}$.
Constraints are added to tie the newly created factor variables to the original latent variables.
\begin{eqnarray}
	\label{eqn:map-augmented}
	\text{MAP}(\theta) &=& \max_{\bx, \bx^\mathcal{F}} \sum_{v \in \mathcal{V}} \theta_v(x_v) + \sum_{f \in \mathcal{F}} \sum_{v \in f} \theta_f(\bx_v^f) \\
	\text{subject to} && x_v = x_v^f,\quad \forall v \in f, f \in \mathcal{F}
\end{eqnarray}
The constraints, $x_v = x_v^f$, are ``complicating'' because were it not for those constraints, the problem would be a convex optimization problem if $\theta_v$ and $\theta_f$ were concave and the domain of $\bx$ was convex.
Indeed, many algorithms for MAP inference can be formulated as relaxations of the constraints, the domain, and the objective function~\citep{wainwright2008graphical}.

Having represented the MAP inference problem as a factor graph, the Lagrange dual can then be formulated with the Lagrange function: 
\begin{equation}
	L(\delta, \bx, \bx^\mathcal{F}) = \sum_{v \in \mathcal{V}} \theta_v(x_v) + \sum_{f \in \mathcal{F}} \sum_{v \in f} \theta_f(\bx_v^f) + \sum_{f \in \mathcal{F}} \sum_{v \in f} \delta_{f_v} (x_v - x_v^f),
\end{equation}
where  $\delta_{f_v}$ is Lagrange variable associated with the constraint $g_{f_v}(x_v, x_v^f) := x_v - x_v^f = 0$.

The MAP estimation problem can be solved by relaxing (removing) all of the constraints and solving the dual problem.
The resulting optimization problem completely separates and the size is proportional to the largest factor.
However, the fully relaxed solution may be non-feasible for the original problem.

The problem is then how should we select constraints to add back to the fully relaxed Lagrange dual problem.
\citep{Sontag2008message} presents an approach that starts with a fully relaxed dual problem and adds constraints back in according to a clustering rule.
However, that approach is heuristic and only developed for discrete latent variables.
Here, we present a method based on Benders' decomposition that optimally selects violated constraints and iteratively tightens the relaxation while maintaining rigorous bounds on the value of the optimum.

\subsection{Contributions}
This work has three main contributions.
First, we derive a dual decomposition MAP inference algorithm that preserves hard domain constraints and produces feasible points in the posterior without relaxations of the domain to a convex set.
Second, we show theoretically and empirically that the algorithm yields a certificate of convergence to the $\epsilon$-local optimal value in finite time.
Third, we evaluate the algorithm on two common Bayesian models: the Bayesian Gaussian mixture model, and latent Dirichlet allocation.
A comparison with Gibbs sampling and variational inference methods shows that our proposed method produces a better optimal value on common test data sets than these other methods.

\subsection{Related Work}

We now provide a brief overview of three widely used methods for maximum a-posteriori (MAP) inference for Bayesian factor models.

Gibbs sampling is an algorithm to generate a sequence of samples from an arbitrary multivariate probability distribution without requiring the normalization constant~\citep{geman1984gibbs, chib1995understanding}. 
It is particularly well-adapted to sampling the posterior distribution of a Bayesian network since Bayesian networks are typically specified as a collection of conditional distributions and the sampler samples each variable, in turn, conditional on the current value of other variables.
The usefulness and practical applicability of Gibbs sampling depends on the ease with which samples can be drawn from the conditional distributions.
If the graph of a Bayesian network is constructed using distributions from the exponential family, and if the parent-child relationships preserve conjugacy, then sampling from the conditional distributions that arise in Gibbs sampling can be done with a known distribution function~\citep{bishop2006pattern}.

The generality of Gibbs sampling comes with some costs.
It is guaranteed to converge in distribution to the posterior, but assessing when that convergence has occurred is difficult~\citep{brooks1998general, gelman1992inference}.
It naturally provides an estimate of the moments of a posterior distribution, but estimating the mode of a high dimensional distribution is challenging because simple averaging does not provide a feasible point if the domain is non-convex~\citep{andrieu2003introduction}.
In clustering models, permutations of the cluster assignment labels have equivalent posterior probabilities.
So, sampling algorithms can ``mode-hop'' which produces samples from different domains, but without practical differences in meaning of the different sample values~\citep{sminchisescu2003mode}.
Finally, sampling methods often become highly inefficient or completely unable to sample from high-dimensional discrete random variables.
The use of a momentum term which requires a derivative improves the efficiency, but limits the applicability of the sampler for clustering applications unless special transformations are employed~\citep{neal2011mcmc}.

Variational Bayesian methods approximate the complicated posterior distribution, $p(\bx|\by; \phi)$, with a simpler surrogate distribution $q(\bx)$ such that the KL divergence between the two, $\text{KL}(q \| p)$, is minimized~\citep{bishop2006pattern}.

When $q$ is taken as a parametric family, the minimization is over the finite set of parameters.
Given the best-fit approximation, the mode of the density (the MAP value) can be obtained using search methods or analytical values if the distribution is simple enough.
It is scalable to large data sets and applicable to complex models where it would be too computationally expensive~\citep{tran2021bayes}.
The minimization of $\text{KL}(q \| p)$ is accomplished by iterative updates to a fixed point; the update equations typically make use of gradient information~\citep{blei2017variational}.
There are some notable results on the consistency of variational inference~\citep{wang2019frequentist, pati2018statistical}.

There are two drawbacks of variational inference methods for MAP estimation.
First, the choice of the surrogate distribution is a tradeoff between computability and fidelity to the true posterior~\citep{wainwright2008graphical}. 
A simple surrogate is easy to optimize, but can yield a MAP estimate that is distant from the true value.
It is difficult to assess what is and is not a good surrogate function.
Second, convergence to the fixed point is typically assessed by the change in the value of the surrogate parameters~\citep{blei2017variational}.
The approach does not provide an upper bound on the optimal value to assess whether better values of the parameters of $q$ may be available.
The algorithm is typically rerun from multiple initial points to guard against convergence to local optima, but selecting initial points is challenging in high dimensional parameter space.

Since the MAP problem is a mixed-integer nonlinear program in general, methods for solving that general class of problems can be used for MAP inference. 
The cutting plane method~\cite{bertsimas2005optimization} was used in \citep{Sontag2008message} to solve the MAP inference problem.
However, the problem considered in that work only involved discrete variables.
When there are continuous variables in the problem, a separate optimization problem is solved for each setting of the discrete variables and no information is passed from the continuous solution to the discrete solver~\citep{gilmore1961linear}.
As a result, for MINLP problems, cutting plane methods can be inefficient. 

Branch and bound methods use a divide and conquer approach to search the feasible set.
In the case of the maximization problem, it uses an upper bound on the optimal cost to exclude parts of the domain and focus the search on areas where improvement is possible~\citep{bertsimas2005optimization, land1960automatic}.
Branch and bound algorithms work by breaking down an optimization problem into a series of smaller sub-problems and using a bounding function to eliminate sub-problems that cannot contain the optimal solution~\citep{clausen1999branch}.
Branch and bound methods have steadily improved in terms of computational efficiency and can now handle very large problem sizes~\citep{bertsimas2017logistic}.
\gls{baron} which is based on the branch and bound philosophy is the state-of-the-art solver for general \glspl{minlp}~\citep{kronqvist2019review}.
It exploits the general problem structure to Branch and bound \glspl{minlp} with domain reduction on both discrete and continuous decision variables~\citep{sahinidis2017baron}.
While branch and bound methods can be used for mixed integer nonlinear programming problems, they seem to be best suited for nonlinear programming problems with no discrete variables because the branching process can require very deep searches.
In the case of clustering, moving a single data point from one cluster to another can have a very small impact on the optimal value and thus the branching process is not able to exclude large parts of the domain close to the root of the branch and bound tree.

Outer approximation methods require the user to explicitly separate the problem into integer and continuous variables~\citep{duran1986outer, floudas2013deterministic}.
Once this is done, the integer variables are held fixed while the solution to a simpler continuous problem is solved.
The outer approximation serves as an upper bound on the maximum of the true objective.
By sequentially tightening the bound, the lower and upper bounds converge to a fixed point.

Generalized Benders' decomposition provides a solution to certain mixed-integer nonlinear programming problems by separating the original problem variables into ``complicating'' and ``not-complicating variables''~\citep{geoffrion1972generalized}.
The original optimization problem is decomposed into a so-called ``master'' problem and ``sub-problem''.
The Lagrange multiplier from the subproblem is used to form constraints on the fully relaxed master problem and the approximation of the original problem is tightened as constraints are added.
In this way, the method resembles the outer approximation method, but there are meaningful differences~\citep{duran1984mixed}.
Generalized Benders' decomposition has been extended to a more general class of mixed-integer nonlinear programs and has been used to solve large-scale problems in chemical engineering and other domains~\citep{floudas2013deterministic}.

\section{Generalized Benders' Decomposition for Factor Graphs}
\label{sec:gbd_sec}

In this section, the Generalized Benders' decomposition approach is extended to problems in statistical inference - in particular maximum a-posterior inference in factor graph models.
As stated, Generalized Benders' decomposition~\citep{geoffrion1972generalized} is used for solving problems of the form
\begin{equation}
\begin{aligned}
	\label{eqn:gbd}
	\max_{x,y} & f(x,y) \\
	\text{subject to } & G(x,y) \geq 0, \quad x \in \mathcal{X}, y \in \mathcal{Y},
\end{aligned}
\end{equation}
where the following conditions on the objective and constraints hold:
\begin{enumerate}
	\item for a fixed $y$, $f(x,y)$ separates into independent optimization problems each involving a different subvector of $x$,
	\item for a fixed $y$, $f(x,y)$ is of a special structure that can be solved efficiently, and 
	\item fixing $y$ renders the optimization problem convex in $x$.
\end{enumerate}
The constraint function $G(x,y)$ captures all of the constraints that involve \textit{both} $x$ and $y$, while the constraints $x \in \mathcal{X}$ and $y \in \mathcal{Y}$ capture the constraints that involve either variable but not both simultaneously.

\begin{theorem}
If the posterior is convex in the latent variables, maximum a-posteriori inference for Bayesian models in factor graph form satisfies the conditions of generalized Benders' decomposition, and thus global optimality can be achieved.
\end{theorem}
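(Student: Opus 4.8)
The plan is to exhibit an explicit correspondence between the augmented MAP problem \eqref{eqn:map-augmented} and the canonical generalized Benders' form \eqref{eqn:gbd}, verify the three structural conditions one at a time, and then invoke the convergence theorem of \citet{geoffrion1972generalized} to conclude global optimality. I would identify the original latent variables $\bx$ with the complicating variables $y$ and the duplicated factor variables $\bxF$ with the remaining variables $x$; the objective $f$ is the factorized log-posterior, and each linking equality $x_v = x_v^f$ becomes a coupling constraint in $G(x,y)\ge 0$ once it is written as the pair $x_v - x_v^f \ge 0$ and $x_v^f - x_v \ge 0$. The sets $\mathcal{X}$ and $\mathcal{Y}$ then absorb any hard domain restrictions that involve only $\bxF$ or only $\bx$, which is precisely where the must-link, cannot-link, and minimum-allocation constraints advertised in the introduction would be placed.

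First I would verify the separability condition. With $\bx$ held fixed, distinct factors $f, f' \in \calF$ contribute disjoint blocks of factor variables $\bx_f^{\calF}$ and $\bx_{f'}^{\calF}$, and the only channel through which two factors could interact is a shared original variable $x_v$, which is now a constant. Because the objective is additively separable over factors and no constraint couples one factor's variables to another's, the subproblem splits as
\[
\max_{\bxF}\ \sum_{f\in\calF}\sum_{v\in f}\theta_f(x_v^f)
\;=\;
\sum_{f\in\calF}\ \max_{\bx_f^{\calF}}\ \sum_{v\in f}\theta_f(x_v^f),
\]
i.e. into $|\calF|$ independent problems, one per factor, each no larger than a single factor. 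This is exactly the decoupling required by condition~1 and is the source of the favorable computational scaling.

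Second, I would check conditions~2 and~3 together. For the special-structure requirement I would appeal to the exponential-family and conjugacy setting emphasized in the related-work discussion: each per-factor subproblem is a local log-likelihood maximization governed by $\theta_f$, which for standard Bayesian factor models admits a closed-form solution or an efficient convex solve. Condition~3 is where the hypothesis of the theorem is used: the linking constraints are affine and hence preserve convexity of the feasible region, and once $\bx$ is fixed the objective reduces to a function of $\bxF$ alone. The assumption that the log-posterior is concave in the latent variables (equivalently, that its negative is convex) makes each fixed-$\bx$ subproblem a convex program, so its optimum and an associated optimal multiplier vector $\delta_{f_v}$ are well defined.

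Finally, with the three conditions in hand, I would invoke \citet{geoffrion1972generalized}: the master problem assembled from the supporting (Benders) cuts generated by the subproblem multipliers produces a monotone sequence of upper bounds, the subproblem values furnish matching lower bounds, and the two gaps close to within $\epsilon$ in finitely many iterations at a global optimum. The step I expect to be the main obstacle is condition~3 together with its attendant constraint qualification. The theorem is stated for a convex posterior, yet the motivating clustering models carry integer-valued complicating variables $\bx$, so the convexity claim has to be read as convexity of the \emph{residual} problem after the integer variables are fixed; and to guarantee that the cuts built from $\delta_{f_v}$ are valid \emph{global} supporting functions rather than merely local tangents, I would need to confirm that each fixed-$\bx$ subproblem satisfies a Slater-type condition so that strong duality holds. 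Establishing this representability of the subproblem value function (Geoffrion's Property~(P)) is the technical crux on which the global-optimality conclusion rests.
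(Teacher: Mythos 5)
Your proposal follows essentially the same route as the paper: identify $\bx$ with the complicating variables $y$ and $\bxF$ with the non-complicating variables $x$, verify Geoffrion's three conditions in turn, and invoke the generalized Benders' convergence result. The only notable difference is that the paper justifies condition~2 by observing that once $\bx$ is fixed the equalities $x_v = x_v^f$ pin down $\bxF$ entirely (so the subproblem is trivial), rather than appealing to conjugacy as you do; your closing remarks on Property~(P) and constraint qualification flag a genuine issue the paper's proof does not address, but the core argument is the same.
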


\begin{proof}
The factor graph representation of maximum a-posteriori inference for a Bayesian model is
\begin{eqnarray*}
	\text{MAP}(\theta) &=& \max_{\bx, \bx^\mathcal{F}} \sum_{v \in \mathcal{V}} \theta_v(x_v) + \sum_{f \in \mathcal{F}} \sum_{v \in f} \theta_f(\bx_v^f) \\
	\text{subject to} && x_v = x_v^f,\quad \forall v \in f, f \in \mathcal{F}
\end{eqnarray*}
and the general form for generalized Benders' decomposition is 
\begin{eqnarray*}
	\max_{x,y} & f(x,y) \\
	\text{subject to } & G(x,y) \geq 0, \quad x \in \mathcal{X}, y \in \mathcal{Y}.
\end{eqnarray*}
Let $x^\calF$ in the MAP inference problem be $x$ in the generalized Benders' decomposition problem, and let $\bx$ in the MAP inference problem be $y$ in the generalized Benders' decomposition problem.

Condition 1 is satisfied because if $\bx$ in the MAP problem is held fixed, the maximization is over a sum of factors, each of which is a subvector of $\bx^\calF$.
If $\bx$ is fixed, the optimization problem is trivial because the constraints $x_v = x_v^f,\quad \forall v \in f, f \in \mathcal{F}$ set the value of $\bx^\calF$; constraint 2 is satisfied.
Finally, the condition that the posterior be convex in the latent variables renders it convex in $\bx^\calF$ for fixed $\bx$.
\end{proof}

\begin{corollary}
\label{benders-corollary}
If the posterior is not convex in the latent variables, maximum a-posteriori inference for Bayesian models in factor graph form does not satisfy the conditions of generalized Benders' decomposition, and thus local optimality may be achieved.
\end{corollary}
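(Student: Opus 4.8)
The plan is to retrace the proof of the preceding theorem under the same identification of variables ($\bxF \leftrightarrow x$ and $\bx \leftrightarrow y$) and isolate the single place where convexity was actually used. First I would note that Conditions~1 and~2 of generalized Benders' decomposition are purely structural. Condition~1 held because fixing $\bx$ leaves an objective that is a sum of factor terms, each over a disjoint subvector of $\bxF$; Condition~2 held because the tying constraints $x_v = x_v^f$ determine $\bxF$ once $\bx$ is fixed. Neither argument mentions convexity, so both conditions persist verbatim when the posterior is non-convex, and the only condition at risk is Condition~3.

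Next I would identify Condition~3 as exactly the convexity hypothesis: in the theorem it was discharged solely by assuming the posterior is convex in the latent variables, which makes the subproblem objective convex in $\bxF$ for each fixed $\bx$. Removing that assumption removes the only justification for Condition~3, and in the non-convex regime relevant to the clustering models treated later—where the per-subproblem objective is itself non-convex—Condition~3 fails outright. Since the three conditions are jointly required, the MAP problem then fails to meet the hypotheses of Geoffrion's global-optimality theorem.

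The final and most delicate step—which I also expect to be the main obstacle—is to argue that failing Condition~3 \emph{downgrades} the guarantee from global to local optimality rather than destroying convergence outright. Here I would trace the role of convexity inside Geoffrion's convergence argument: the Benders cuts appended to the master problem are built from the Lagrange multipliers returned by each subproblem, and their validity as global bounds on $\text{MAP}(\theta)$ rests on strong duality of the subproblem, which in turn rests on convexity. When the subproblem is non-convex a duality gap may open, so a cut derived from a locally optimal multiplier need only be valid near the current iterate; the master--subproblem iteration can then still converge, but only to a point satisfying first-order stationarity, that is, a local optimum. Making this precise requires localizing Geoffrion's proof—observing that each subproblem solve returns a stationary point and that the accumulated cuts enforce consistency only in a neighborhood of the iterate—rather than merely recording that a hypothesis has failed, and the qualifier ``may'' in the statement is essential, since for benign instances the duality gap can still vanish and global optimality can recur.
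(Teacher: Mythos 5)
Your first two paragraphs reproduce exactly the paper's (implicit) argument: the corollary is stated without a proof of its own, as the immediate observation that convexity enters the theorem's proof only through Condition~3, so only that condition is at stake when the posterior is non-convex; the paper then verifies the failure concretely for BGMM and LDA in the appendices by exhibiting a block of the subproblem Hessian that is not negative definite. Your third paragraph attempts more than the paper does, which is a reasonable instinct, but one step is imprecise: for equality constraints with an exactly solved inner maximization, the cut $x_0 \le L^*(\bx,\bu)$ is \emph{globally} valid by weak duality alone, irrespective of convexity; what strong duality (hence convexity) buys is tightness, i.e.\ $v(\bx) = \min_{\bu} L^*(\bx,\bu)$, so the actual failure modes are a persistent gap between the master's upper bound and the subproblem's lower bound, together with the practical impossibility of solving the non-convex subproblem to global optimality (which is what can render computed cuts invalid). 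Since the corollary claims only that local optimality \emph{may} be achieved, this imprecision does not affect the conclusion, and your proposal is consistent with what the paper asserts.
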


Generalized Benders' decomposition separates the \emph{complicating} variables in the optimization problem with the idea that holding these variables fixed renders the optimization problem much simpler.
In the Bayesian factor graph model formulation, the original latent variables $\bx$ are construed as complicating variables.
The factor variables, $\bx^\calF$ 
- the clones that were created in the formation of the factor graph - are the non-complicating variables.
The approach is best demonstrated by examples.
First, we present the general purpose algorithm. Then in the next sections, we derive specific MAP inference algorithms for the Bayesian Gaussian mixture model and the latent Dirichlet allocation model.
Proofs and technical details of the specific derivations are found in the Appendices.

\subsection{Algorithm}

Benders' decomposition applied to the factor graph yields an algorithm for solving the problem~\ref{eqn:map}.
\begin{enumerate}
	\item Pick a feasible point $\bar{\bx}$. Solve the subproblem $(1-\bar{\bx})$,
	\begin{equation}
		\max_{\bxF} \sum_{f \in \mathcal{F}} \theta_f (\bx_f^f), \quad \text{s.t.}\quad \bar{x}_v = x_v^f,\ \forall v \in f, f \in \mathcal{F}.
	\end{equation}
	Obtain the optimal multiplier vector $\bar{\bu}$ and the function $L^*(\bx, \bar{\bu})$, where
	\begin{equation*}
		L^*(\bx; \bar{\bu}) = \max_{\bxF} \left\{ \sum_{v \in \mathcal{V}} \theta_v(x_v) + \sum_{f \in \mathcal{F}} \sum_{v \in f} \theta_f(\bx^f) + \bar{\bu}^T G(\bx, \bxF ) \right\}.
	\end{equation*}
	Since the constraints are all of the form $x_v = x_v^f$, the function $L^*$ has the following decomposition
	\begin{equation}
		\sum_{v \in \mathcal{V}} \theta_v(x_v) + \sum_{f \in \mathcal{F}} \sum_{v \in f} \bar{u}_{x_v^f} x_v + \max_{\bxF} \left\{  \sum_{f \in \mathcal{F}} \theta_f(\bx^f) - \sum_{f \in \mathcal{F}} \sum_{v \in f} \bar{u}_{x_v^f} x_v^f \right\}.
	\end{equation}
	And, since the objective is separable, the optimization problem can be solved efficiently.
	Set $p=1$, $q=1$, $u^1 = \bar{u}$, and $\text{LBD} = v(\bar{x})$.
	
	\item Solve the \textit{relaxed master problem}
	\begin{eqnarray*}\label{eqn:master-problem}
		\max_{\bx, x_0}\quad  & x_0& \\
		\text{subject to}\quad  & x_0 & \leq L^*(\bx, \bu^j) , \quad j = 1, \ldots, p \\
		& 0 &\leq L^*(\bx, \mathbf{\lambda}^j), \quad j = 1, \ldots, q.
	\end{eqnarray*}
	Let $(\hat{\bx}, \hat{x}_0)$ be an optimal solution; $\hat{x}_0$ is an upper bound on the optimal value of \eqref{eqn:gbd}.
	If $\text{LBD} \geq \hat{x}_0 - \epsilon$, then terminate.
	
	\item Solve the revised \textit{subproblem} $(1-\hat{\bx})$.
	One of two outcomes is possible:
	\begin{enumerate}
		\item \textbf{The quantity $v(\hat{\bx})$ is finite.}
		If $v(\hat{\bx}) \geq x_0 - \epsilon$, then terminate.
		Otherwise, determine the optimal Lagrange multiplier $\hat{\bu}$ (or a near-optimal multiplier), and the function $L^*(\bx, \hat{\bu})$.
		Increment $p$ by $1$ and set $\bu^p = \hat{\bu}$.
		If $v(\hat{\bx}) > \text{LBD}$, set update the lower bound, $\text{LBD} = v(\hat{\bx})$.
		Return to step 2.
		
		\item \textbf{Problem $(1- \hat{\bx})$ is infeasible.}
		Determine $\hat{\mathbf{\lambda}}$ and the function $L_*(\bx; \hat{\mathbf{\lambda}})$.
		Increase $q$ by $1$ and set $\mathbf{\lambda}^q = \hat{\mathbf{\lambda}}$.
		Return to step 2.
	\end{enumerate}
\end{enumerate}
This algorithm selects the \textit{most violated constraint} as assessed by the Lagrange multiplier and adds that constraint to the master problem.
In this way, Benders' decomposition addresses the challenge of selecting constraints in the variational approximation of factor graph models.

\section{Bayesian Gaussian Mixture Model}
\label{sec:bgmm_sec}
In this section, we formulate and derive a Generalized Benders decomposition algorithm for the Bayesian Gaussian mixture model.

\subsection{Problem Formulation} \label{sec:bgmm-problem}
A Bayesian Gaussian mixture model has the form:
\begin{eqnarray}
	y_i | z_i, \mu, \Lambda &\sim& \text{Gaussian}(\mu_{z_i}, \Lambda^{-1}_{z_i}) \\
	z_i | \pi &\sim& \text{Categorical}_K(\pi) \\
	\pi | \alpha_0 &\sim& \text{Dirichlet}_K(\alpha_0) \\
	\mu_k | \Lambda_k &\sim&  \text{Gaussian}(\mu_0, (\beta_0 \Lambda_k)^{-1}) \\
	\Lambda_k &\sim& \text{Wishart}(W_0, \nu_0)
\end{eqnarray}

\begin{figure}[h!]
	\centering
	\includegraphics[width=0.5\linewidth]{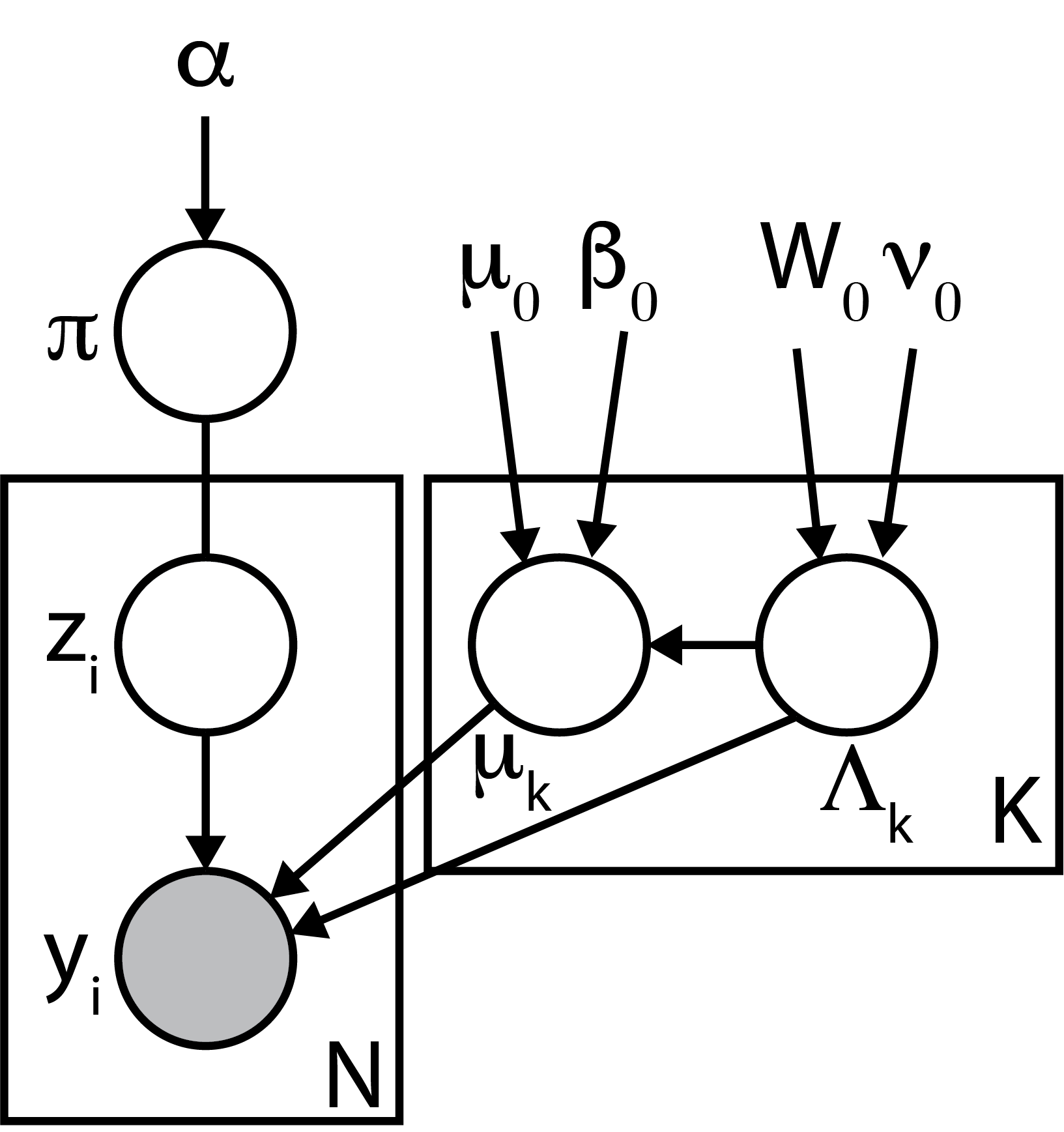}
	\caption{Graphical model representation of the BGMM model.}
	\label{fig:bgmm-graphical-model}
\end{figure}

The object of inference is the full set of latent variables $x = (z_1, \ldots, z_N, \mu_1, \ldots, \mu_K, \Lambda_1, \ldots, \Lambda_K, \pi_1, \ldots, \pi_K)$.
The value of $z_i$ gives the cluster assignment for observation $y_i$, the values of $\mu_k$ and $\Lambda_k$ give the cluster location and spread, and the value of $\pi_k$ gives the relative proportion of the data in cluster $k$.
The prior, $p(x|\phi)$ is parameterized by $\phi = (\alpha_0, \beta_0, W_0, \nu_0)$.
Given this model, the posterior density function factorizes as
\begin{equation}
	p(x | y, \phi) \propto p(x,y,\phi) = p(y | z, \mu, \Lambda) p(z | \pi) p(\pi ; \alpha_0) p(\mu | \Lambda; \beta_0) p(\Lambda; W_0, \nu_0).
\end{equation}
and can be represented as a factor graph (for full derivations, see Appendix~\ref{sec:bgmm_factor_form}).

A graphical representation of the factor graph is shown in Figure~\ref{fig:bgmm-factor-model}.

\begin{figure}[h!]
	\centering
	\includegraphics[width=0.9\linewidth]{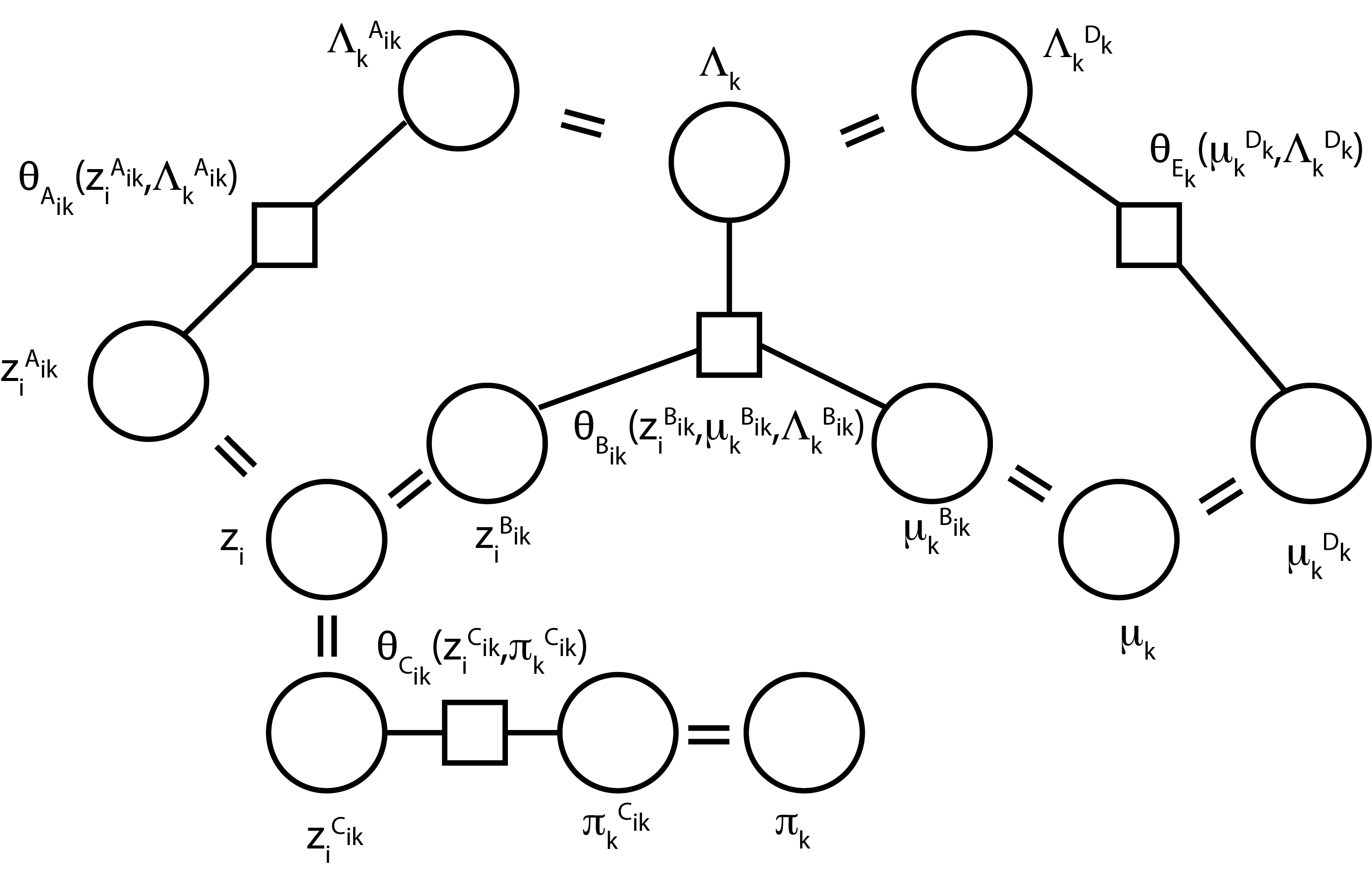}
	\caption{Factor graph representation of the BGMM model.}
	\label{fig:bgmm-factor-model}
\end{figure}

\begin{lemma}
The posterior of BGMM is not convex in the latent variables. (See Appendix~\ref{sec:bgmm-proof} for the proof.)
\end{lemma}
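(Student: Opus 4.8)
The plan is to disprove concavity of the log-posterior — the quantity that MAP inference maximizes — since a ``convex'' posterior in the sense of the theorem above means precisely that $-\log p(x\mid y,\phi)$ is convex over a convex domain, so that \eqref{eqn:map} is a convex program. I would argue non-convexity at two levels. First, the assignment variables $z_1,\dots,z_N$ range over the discrete, and hence non-convex, set $\{1,\dots,K\}^N$, so the feasible region already fails to be convex. The substantive content, however, is that even on the continuous block $(\mu,\Lambda,\pi)$ the log-density is not concave, and that is where I would concentrate the argument.

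I would isolate the coupling between a cluster mean $\mu_k$ and its precision $\Lambda_k$. Collecting every term in $\log p(x\mid y,\phi)$ that depends on $(\mu_k,\Lambda_k)$ — the assigned Gaussian likelihoods, the conjugate prior $p(\mu_k\mid\Lambda_k;\beta_0)$, and the Wishart prior — the offending structure is the quadratic form $-\tfrac12(y_i-\mu_k)^\top\Lambda_k(y_i-\mu_k)$ and the analogous prior term, each linear in $\Lambda_k$ but quadratic in $\mu_k$. To make the obstruction transparent I would reduce to the scalar case $d=1$, writing $\lambda_k$ for the scalar precision and reducing the relevant slice to
\begin{equation*}
g(\mu_k,\lambda_k)=c\log\lambda_k-\tfrac12\lambda_k\,Q(\mu_k)-b\,\lambda_k,\qquad Q(\mu_k)=\sum_{i:z_i=k}(y_i-\mu_k)^2+\beta_0(\mu_k-\mu_0)^2,
\end{equation*}
where $c>0$ gathers the powers of $\lambda_k$ and $Q$ is a positive-definite quadratic. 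Then I would compute the $2\times2$ Hessian and read off
\begin{equation*}
\det H=\frac{c\,Q''}{2\lambda_k}-\frac14\bigl(Q'(\mu_k)\bigr)^2,
\end{equation*}
which becomes strictly negative whenever $\mu_k$ is taken far enough from the minimizer of $Q$ (so that $|Q'(\mu_k)|$ is large) with $\lambda_k>0$. A negative determinant together with negative diagonal entries means $H$ is indefinite, so $g$ — and therefore $\log p$ — is not concave on the valid domain $\lambda_k>0$. This exhibits an explicit point of positive curvature, which is all that is required.

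As an independent confirmation I would invoke the label-switching symmetry: the posterior is invariant under any permutation $\sigma\in S_K$ acting jointly on $(\mu_k,\Lambda_k,\pi_k)$ and the labels $z_i$. Taking a high-posterior, well-separated configuration $x^{(1)}$ and its image $x^{(2)}$ under a transposition of two clusters gives $p(x^{(1)})=p(x^{(2)})$, while the midpoint collapses those two cluster means onto $\tfrac12(\mu_1+\mu_2)$ and therefore has strictly smaller posterior; concavity of $\log p$ would instead force the midpoint value to dominate, a contradiction. I expect the main obstacle to be bookkeeping rather than conceptual: correctly collecting the exponent $c$ from the three sources of $\lambda_k$-dependence and confirming that the region where $\det H<0$ genuinely lies inside the positive-definite cone. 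Extending the scalar Hessian computation to general dimension $d$ is routine once the $\mu_k^\top\Lambda_k\mu_k$ coupling has been identified, since the indefiniteness persists along any one-dimensional slice that varies a single mean coordinate against a single precision entry.
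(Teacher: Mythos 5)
Your proposal is correct and shares the core strategy of the paper's proof: exhibit a $2\times 2$ principal submatrix of the Hessian that is indefinite on the feasible domain, so the log-posterior is not concave and condition 3 of generalized Benders' decomposition fails. The difference lies in which coupling is exploited. The paper works in the cloned factor-variable coordinates (the object directly relevant to condition 3) and picks the block pairing $z_{ik}$ with $\Lambda_k$ inside the factor $\theta_{A_{ik}}=\tfrac12 z_{ik}\left[\log\det\Lambda_k - y_i^T\Lambda_k y_i\right]$: because this factor is linear in $z_{ik}$, the corresponding diagonal Hessian entry vanishes while the off-diagonal entry is generically nonzero, so the block has determinant $-a^2<0$ and eigenvalues of opposite sign essentially everywhere, with no need to choose a special point. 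You instead use the $\mu_k$--$\Lambda_k$ coupling from $-\tfrac12\lambda_k Q(\mu_k)$ and must push $\mu_k$ away from the minimizer of $Q$ so that $\det H=\frac{cQ''}{2\lambda_k}-\frac14\bigl(Q'(\mu_k)\bigr)^2<0$; this is a valid and correctly computed certificate (the witness point does lie in the cone $\lambda_k>0$), just slightly less economical, and it has the virtue of addressing the lemma's literal statement about the posterior in the original latent variables rather than the factor clones. Your two supplementary observations are dispensable: the non-convexity of the discrete domain of $z$ is true but not what the lemma is driving at, and the label-switching argument needs care because the midpoint of two permuted configurations is not feasible for the discrete $z$ block, so you would have to restrict the permutation to the continuous parameters (or pass to a relaxation) before the Jensen-type contradiction is rigorous. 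The main Hessian computation stands on its own without either.
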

Thus, by Corollary~\ref{benders-corollary}, we note that our proposed method achieves local optimality. 
Even so, the value it converges to is quite good. 
\subsection{Experiments}
We compare our proposed approach to variational Bayes and the Gibbs sampler on three standard clustering data sets.

\paragraph{Data collection and preprocessing}
We obtained the Iris (\texttt{iris}, $n=150, d=4$), Wine Quality (\texttt{wine}, $n=178, d=13$), and Wisconsin Breast Cancer (\texttt{brca}, $n=569, d=30$) data sets from the UCI Machine Learning Repository \citep{dua2019uci}.
A 3-d projection of \texttt{iris} was obtained by projecting on the first three principal components, a 6-d projection of \texttt{wine} was obtained by projecting on the first six principal components, and only the following features were employed for the \texttt{brca} data set: worst area, worst smoothness, and mean texture, then standardized.
The reduction in dimensions was conducted to obtain not highly correlated ($< 0.7$) dimensions but to keep a good portion of overall variability ($> 0.85$).
Since our goal is to obtain the global \gls{map} clustering given the data set rather than a prediction, all of the data was used for clustering.

\paragraph{Experimental protocol}
We used the \texttt{scikit-learn} \citep{scikit-learn} implementation of variational Bayes and implemented the Gibbs sampler for the Bayesian Gaussian mixture model in \texttt{python} as per the conditionals derived in \citep{bruno2021auto}.
The convergence of the chains was assessed using standard trace plots~\citep{roy2020convergence} (See Appendix~\ref{sec:gibbs-appendix}).
The variational Bayes experiments were run using algorithms defined in \texttt{python/scikitlearn} and the Gibbs sampler experiments were run using generative models defined in section~\ref{sec:bgmm-problem} on 1,000 iterations (with a burn-in period of 100 iterations) in \texttt{python}.

Our approach was implemented in GAMS \citep{bisschop1982gams, gams2017manual}, a standard optimization problem modeling language by initializing it after running the branch and bound algorithm (BNB) for six hundred seconds and adding the constraints detailed in Appendix~\ref{sec:bgmm-proof}.  

\paragraph{Clustering Comparison Metrics}
Evaluating the quality of a clustering algorithm is inherently more challenging because more often than not there is no true label of the data; so we compare clustering solutions to one another rather than to a ground truth.
\citep{meilua2003comparing} proposed an information theoretic criterion for comparing two clustering solutions. 
Variation of Information (VoI) measures the amount of information lost and gained in changing from clustering $C$ to clustering $C'$: 
\begin{equation}
    \label{eqn:vid}
    \textrm{VoI}(C,C') = H(C) + H(C') - 2I(C,C'),
\end{equation}
where $$H(C)= -\sum_{k=1}^K P_C(k) \log P_C(k)$$ is the entropy associated with clustering $C$, and $$I(C, C') = \sum_{k=1}^K \sum_{k'=1}^{K} P_{C,C'}(k,k') \log \frac{P_{C,C'}(k,k')}{P_C(k)P_{C'}(k')}$$ is the mutual information between the clustering solutions.
In the entropy and mutual information computations, $P_{C,C'}(k,k') = \frac{|C_k \cap C'_{k'}|}{n}$ is the joint probability that a point belongs to $C_{k}$ and $C'_{k'}$, and $P_C(k) = \frac{|C_k|}{n}$ is proportion of points in cluster $k$.
Along with Variation of Information, we also report three additional quantities: $\log$ MAP (calculated using \ref{eq:term1} - \ref{eq:term6}), running time - the running time taken by each algorithm in seconds, and optimality guaranteed - whether optimality is achieved by each algorithm, which helps us understand the nature of these algorithms compared to our proposed method better. Running time and optimality guarantee analyses are provided in Appendix ~\ref{sec:runopt-appendix}.

\paragraph{Comparison to other algorithms}
Table~\ref{table:vi-tables} and Table~\ref{table:loglik-table} show comparisons of our proposed method (GBD) with other standard Bayesian estimation methods (variational Bayes, Gibbs sampler). 
Table~\ref{table:vi-tables} shows a comparison of inference algorithms using the variation of information (VoI) distance for three standard clustering test data sets: \texttt{iris}, \texttt{wine}, and \texttt{brca}.
A large VoI score indicates that the sample assignment configuration between two inference algorithms has little overlap and a small VoI score indicates a large degree of overlap. 
The VoI distance between GBD (our method) and the provided label is comparable to the distance between other inference methods and the label. 
In all tests, the GBD solution is closer to the label and closer to variational Bayes than the Gibbs sampler.
We note the caveat that labels are typically generated by human classification and the labels may be erroneous when the differences between groups are difficult to ascertain based on features alone.  

\begin{table}[h!]
    \begin{subtable}[b]{\textwidth}
        \centering
        \begin{tabular}{@{}lcccc@{}}
            \toprule
            & Label & Var Bayes & Sampler & GBD\\
            \midrule
            Label & 0 & 0.124 & 0.146 & 0.149  \\
            Var Bayes &  & 0 & 0.122 & 0.116  \\
            Sampler &  &  & 0 & 0.122 \\
            GBD &  & & &  0 \\
            \bottomrule
        \end{tabular}
        \caption{VoI distance for \texttt{iris} data set.}
    \end{subtable}\\[10pt]
    \begin{subtable}[b]{\textwidth}
        \centering
        \begin{tabular}{@{}lcccc@{}}
            \toprule
            & Label & Var Bayes & Sampler & GBD\\
            \midrule
            Label & 0 & 0.033 & 0.261 & 0.048  \\
            Var Bayes & & 0 & 0.252 & 0.032  \\
            Sampler & & & 0 & 0.264 \\
            GBD &  & & & 0 \\
            \bottomrule
        \end{tabular}
        \caption{VoI distance for \texttt{wine} data set.}
    \end{subtable}\\[10pt]
    \begin{subtable}[b]{\textwidth}
        \centering
        \begin{tabular}{@{}lcccc@{}}
            \toprule
            & Label & Var Bayes & Sampler & GBD\\
            \midrule
            Label & 0 & 0.097 & 0.206 & 0.080  \\
            Var Bayes & & 0 & 0.210 & 0.079  \\
            Sampler & & & 0 & 0.196 \\
            GBD &  & & & 0 \\
            \bottomrule
        \end{tabular}
        \caption{VoI distance for \texttt{brca} data set.}
    \end{subtable}
    \caption{Comparison of inference algorithms for the Bayesian Gaussian mixture model using variation of information (VoI) scores.}
    \label{table:vi-tables}
\end{table}

Table~\ref{table:loglik-table} shows the estimated $\log$ MAP values for the three different methods on all three data sets along with the log MAP values for branch and bound (BNB) since we use it to initialize our method.
In terms of log MAP, GBD identifies a MAP estimate that is better than the other inference methods.

\begin{table}[h!]
        \centering
        \begin{tabular}{@{}lccc@{}}
            \toprule
            & \texttt{iris} & \texttt{wine} & \texttt{brca} \\
            \midrule
            Var Bayes & -150.997 & -676.813 & -681.272 \\
            Sampler & -171.044 & -1172.079 & -1195.411 \\
            BNB (UB) & -90.149 (230925.3) & -646.203 (1156346.8) & -660.976 (127090.3) \\ 
            GBD (UB) & \textbf{-89.411 (-81.5)} & \textbf{-646.200 (-200.4)} & \textbf{-660.971 (-657.3)} \\
            \bottomrule
        \end{tabular}
        \caption{Comparison of inference algorithms for the Bayesian Gaussian mixture model using log MAP values. GBD outperforms both variational Bayes and the Gibbs sampler. }
        \label{table:loglik-table}
\end{table}

\paragraph{Impact of must-link constraints.}
One of the benefits of our methodology, aside from a certificate of convergence, is the capability to naturally add constraints. 
In Table~\ref{table:const-table}, we outline the log MAP values after adding 2, 4, 8, 16, and 32 must-link constraints on the \texttt{iris} data set. 
We also report the same values from the branch and bound algorithm after adding the same constraints.
The lower bound for GBD does not change except for the case with 16 constraints indicating that the optimal MAP is identified in all cases
The upper bound shows a downward trend indicating that the constraints improve the ability to provide a certificate of (local) optimality.
Even so, GBD outperforms BNB in all cases and is able to decrease the upper bound of the log MAP with the addition of new constraints.

\begin{table}[h!]
        \centering
        \begin{tabular}{@{}cccc@{}}
            \toprule
            Must-Link & GBD (UB) & BNB (UB) \\
            \midrule
             2 & -90.146 (-81.751) & -90.149 (230000.6) \\
             4 & -90.146 (-78.566) & -90.149 (229878.16) \\
             8 & -90.146 (-78.566) & -90.149 (228843.24) \\ 
             16 & -89.055 (-81.486) & -90.149 (227252.85) \\  
             32 & -90.146 (-78.089) & -90.149 (207962.17) \\
            \bottomrule
        \end{tabular}
        \caption{Comparison of log MAP values by adding a different number of must-link constraints for the \texttt{iris} data set.}
        \label{table:const-table}
\end{table}

\section{Latent Dirichlet Allocation}
\label{sec:lda_sec}
In this section, we formulate and derive a Generalized Benders decomposition algorithm for the Latent Dirichlet allocation (LDA) model~\citep{blei2003latent}.

\subsection{Problem Formulation}

A latent Dirichlet allocation model has the form:
\begin{eqnarray}
    \beta_k | \eta_0 &\sim& \text{Dirichlet}_V(\eta_0) \\
    \Theta_d | \alpha_0 &\sim& \text{Dirichlet}_K(\alpha_0) \\
    z_{dn} | \Theta_d &\sim& \text{Categorical}_K(\Theta_d) \\
    w_{dn} | z_{dn}, \beta_k &\sim& \text{Categorical}_V(\beta_{z_{dn}}).
\end{eqnarray}

\begin{figure}[h!]
	\centering
	\includegraphics[width=0.9\linewidth]{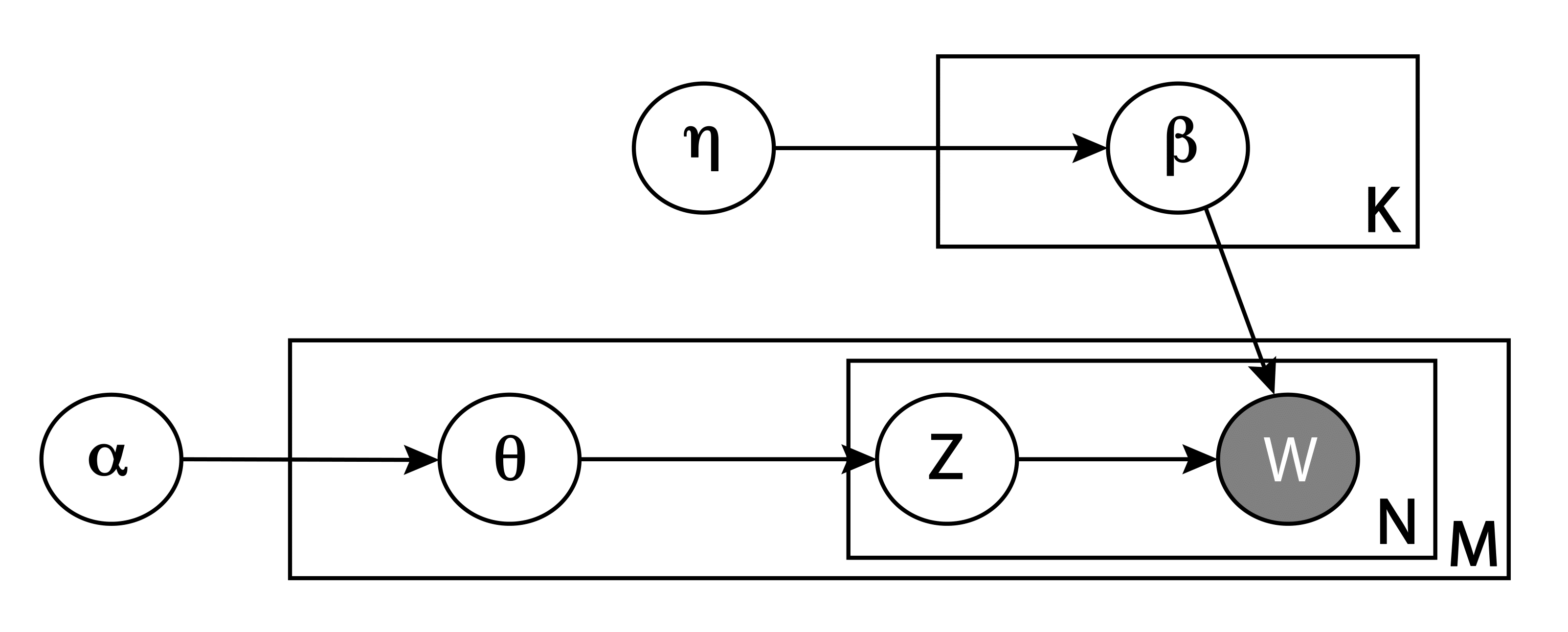}
	\caption{Graphical model representation of the smoothed LDA model.}
	\label{fig:lda-graphical-model}
\end{figure}

The object of inference is the full set of latent variables $x = (z_{11}, \ldots, z_{MN}, \Theta_1, \ldots, \Theta_M, \beta_1, \ldots, \beta_K)$.
The values of $\beta_k$ are the distribution of words in topic $k$, the values of $\Theta_d$ are the probability distribution of topics in document $d$, and the value of $z_{dn}$ gives the topic assignment of an observed $n$-th word in document $d$, $w_{dn}$.
The prior, $p(x|\phi)$ is parameterized by $\phi = (\alpha_0, \eta_0)$.
Given this model, the posterior density function factorizes as
\begin{equation}
	p(x | w, \phi) \propto p(x,w,\phi) = p(w | z, \beta) p(z | \Theta) p(\Theta ;\alpha_0) p(\beta;\eta_0)
\end{equation}	
and can be represented as a factor graph (for full derivations, see Appendix~\ref{sec:lda_factor_form}).

A graphical representation of the factor graph is shown in Figure~\ref{fig:lda-factor-model}.

\begin{figure}[h!]
	\centering
	\includegraphics[width=0.9\linewidth]{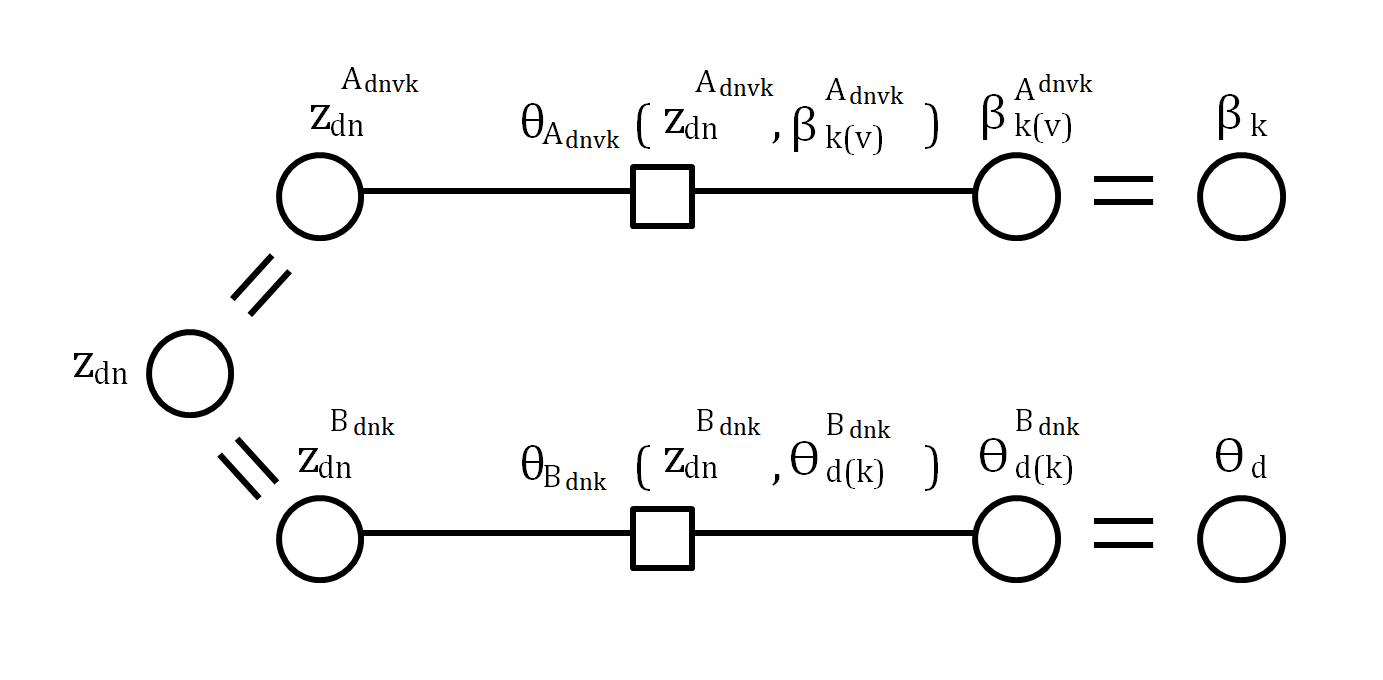}
	\caption{Factor graph representation of the smoothed LDA model.}
	\label{fig:lda-factor-model}
\end{figure}

\begin{lemma}
The posterior of BGMM is not convex in the latent variables. (See Appendix~\ref{sec:lda-proof} for the proof.)
\end{lemma}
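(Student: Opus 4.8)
The plan is to supply an explicit certificate of non-convexity so that Corollary~\ref{benders-corollary} applies and only local optimality can be guaranteed. Since MAP inference maximizes the log-posterior, the favorable (``convex'') case in the sense of the theorem's condition~3 is a concave log-posterior over a convex domain; I will therefore show that the BGMM log-posterior fails to be concave in its latent variables. A first, immediate observation is that the latent vector contains the categorical assignments $z_1,\dots,z_N$, whose support $\{1,\dots,K\}^N$ is not a convex set, so the objective is not even defined on a convex domain. This already precludes the favorable case, but it is unsatisfying because it says nothing about the continuous problem that Benders' decomposition actually solves once the assignments are fixed; the substantive content is that non-concavity persists on the continuous block.

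Accordingly, the main step is to fix $z$ and $\pi$ and examine the restriction of the log-posterior to the location--precision variables of a single component $(\mu_k,\Lambda_k)$ over the convex domain $\mathbb{R}^{d}\times\{\Lambda_k\succ0\}$. Taking the log of the factorization and collecting the terms involving $(\mu_k,\Lambda_k)$ gives a sum of Gaussian log-likelihoods $\tfrac12\log\lvert\Lambda_k\rvert-\tfrac12(y_i-\mu_k)^{\!\top}\Lambda_k(y_i-\mu_k)$ over the $n_k$ points assigned to component $k$, the Normal--Wishart prior term $\tfrac12\log\lvert\Lambda_k\rvert-\tfrac{\beta_0}{2}(\mu_k-\mu_0)^{\!\top}\Lambda_k(\mu_k-\mu_0)$, and the Wishart term $\tfrac{\nu_0-d-1}{2}\log\lvert\Lambda_k\rvert-\tfrac12\Tr(W_0^{-1}\Lambda_k)$. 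I would then compute the joint Hessian in $(\mu_k,\Lambda_k)$. The $\mu_k$--$\mu_k$ block equals $-(n_k+\beta_0)\Lambda_k\prec0$, and the $\log\lvert\Lambda_k\rvert$ contributions make the $\Lambda_k$--$\Lambda_k$ block negative definite, so each coordinate direction is \emph{individually} concave; the obstruction lives entirely in the mixed block generated by the quadratic forms $(y_i-\mu_k)^{\!\top}\Lambda_k(y_i-\mu_k)$, which is nonzero away from the component's weighted mean.

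To turn this into a clean proof I would restrict further to the two-dimensional affine slice $\mu_k=\mu_0+t\,e_1,\ \Lambda_k=\lambda M$ with $M\succ0$ fixed and $\lambda>0$, which remains inside the convex domain, reducing the restriction to the scalar form $A\log\lambda-\tfrac{\lambda}{2}S(t)-B\lambda+\text{const}$ with $A,B>0$ and $S$ a positive quadratic. Its $2\times2$ Hessian has determinant $\tfrac{A(n_k+\beta_0)}{\lambda}-\tfrac14 S'(t)^2$, and since $S'(t)$ is linear in $t$ while the first term is bounded for fixed $\lambda$, the determinant turns strictly negative once $\lvert t\rvert$ is large enough; at such a point the Hessian has one negative and one positive eigenvalue, so the restriction, and hence the full log-posterior, is not concave. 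Because a restriction of a concave function to an affine subspace is concave, exhibiting this single non-concave slice completes the argument. The main obstacle I anticipate is precisely this last point: the $\log\det$ normalizers render the naive claim ``the mean--precision coupling is indefinite'' false on its own, so the proof must locate a concrete point in the domain---here, a component mean driven far from its conditional location---at which the mixed curvature dominates the concave diagonal curvature, rather than merely pointing to an indefinite bilinear term.
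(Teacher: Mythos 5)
A bookkeeping point first: the lemma you were handed is the copy that appears in the paper's LDA section, and the appendix it cites (Appendix~\ref{sec:lda-proof}) actually carries out the computation for LDA --- the word ``BGMM'' there is a copy-paste slip. You proved the statement as literally worded, i.e.\ non-concavity of the BGMM log-posterior, which is the content of the twin lemma proved in Appendix~\ref{sec:bgmm-proof}; that is the proof I compare against.

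Your argument is correct, but it takes a genuinely different route from the paper's. The paper works in the factor-graph (copy) variables, treats the copies of the assignments $z_{ik}$ as differentiable quantities, and reads off a $2\times 2$ principal block of the Hessian of the form $\begin{pmatrix} 0 & c \\ c & d \end{pmatrix}$, the zero diagonal entry arising because each factor $\theta_{A_{ik}}$ is \emph{linear} in $z_{ik}$; its eigenvalues $(d\pm\sqrt{d^2+4c^2})/2$ have opposite signs whenever $c\neq 0$, so the Hessian is indefinite at essentially every point. You instead hold the discrete $z$ (and $\pi$) fixed --- sidestepping the questionable step of differentiating with respect to an assignment variable --- and locate the obstruction inside the purely continuous $(\mu_k,\Lambda_k)$ block, via the slice $\mu_k=\mu_0+te_1$, $\Lambda_k=\lambda M$, where the restriction $A\log\lambda-\tfrac{\lambda}{2}S(t)-B\lambda$ has Hessian determinant $\tfrac{A\,S''}{2\lambda}-\tfrac14 S'(t)^2$, which turns negative once $|t|$ is large because $S'$ is an unbounded linear function. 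Both are valid certificates of the failure of condition~3 of generalized Benders' decomposition. The paper's is shorter and pointwise, but it hinges on implicitly relaxing $z$ to a continuous variable; yours is a stronger and cleaner statement in that the non-concavity is shown to be intrinsic to the mean--precision coupling and to persist even with all discrete variables frozen, at the price of having to push the slice parameter far from the component's conditional mean. Two details worth tightening: your determinant drops the factor $M_{11}$ from $S''=2(n_k+\beta_0)M_{11}$ (harmless if you normalize $M_{11}=1$), and positivity of $A$ requires $n_k+\nu_0>D$; if instead $A\le 0$ the restriction is already non-concave in $\lambda$ alone, so your conclusion holds in every case, but the proof should say so.
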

Thus, by Corollary~\ref{benders-corollary}, we note that our proposed method may achieve a local optimality.

\subsection{Experiments}
We compare our proposed approach to variational Bayes and the Gibbs sampler on the 20 newsgroups data set.

\paragraph{Data collection and preprocessing}
We obtained the 20 newsgroups (\texttt{news20}, $n=18,846$, $d=1$) data set from the UCI KDD Archive \citep{ucikdd}. 
We only look at 50 randomly selected documents from the original data set, and run all methods under consideration on a list of 5 topics and 25 most frequently used words within the selected data.

\paragraph{Experimental protocol}
We used the \texttt{scikit-learn} \citep{scikit-learn} implementation of variational Bayes and implemented the Gibbs sampler for latent Dirichlet allocation model based on Griffiths and Steyvers ~\citep{griffiths2002}.
The variational Bayes experiment was run using algorithms defined in \texttt{python/scikitlearn}, and the Gibbs sampler experiments were run via \texttt{python} for 1,000 iterations (with a burn-in period of 100 iterations).
Our approach was implemented in GAMS \citep{bisschop1982gams, gams2017manual}, a standard optimization problem modeling language, with a solver BARON \citep{tawarmalani2005polyhedral, sahinidis2017baron}.
For each problem solved using BARON, we limit the process to end in 600 seconds.
Like the experiments for the BGMM, our approach was implemented by initializing it after running the branch and bound algorithm and adding the constraints detailed in Appendix~\ref{sec:lda-proof}.  

\paragraph{Comparison Metrics}
To compare our proposed method to variational Bayes and the Gibbs sampler we calculate the \textit{log MAP} - calculated using \ref{eq:term1_lda} - \ref{eq:term4_lda} and we also report the top ten words for two topics generated by the three different methods under consideration. 
To make sure the two topics that we select align across the three different methods, we sort the topics in descending order and then pick the top ten words from each topic according to their probabilities. 
Another important metric to compare LDA models is held-out test perplexity which has been analyzed for all three methods in Appendix ~\ref{sec:perp-appendix}.  

\paragraph{Comparison to other algorithms}

In terms of log MAP, GBD outperforms the other methods under consideration with a log MAP value of -1.10113E+6 as compared to -1.11275E+6 for variational Bayes and -1.1124280E+6 for the Gibbs sampler. 
Table \ref{table:topic-table} shows the top ten words from topic 1 and topic 2 generated by each of the three methods under consideration.
We observe that all three methods under consideration show similar performance. 
We highlight those words in Table \ref{table:topic-table} that are common between at least two of the three methods under consideration. 
For both topic 1 and topic 2 there is a high degree of overlap between the top ten selected words. 
One important thing to note however is that our proposed method provides a certificate of local optimality in finite time.
On the other hand, variational Bayes provides a locally optimal solution but to an approximation of the posterior, not to the exact posterior.
And, the Gibbs sampler does not provide an optimality guarantee for a finite sample size.
Also important to consider is the fact that our proposed method allows the addition of hard constraints based on some prior knowledge of the problem which other competing methods do not allow.

\begin{table}[]
    \centering
    \begin{tabular}{lcccccl}\toprule
    & \multicolumn{3}{c}{Topic 1} & \multicolumn{3}{c}{Topic 2}
    \\\cmidrule(lr){2-4}\cmidrule(lr){5-7}
& GBD            & Sampler        & Var Bayes        & GBD     & Sampler & Var Bayes\\\midrule
& \textbf{israel} & \textbf{like}  & \textbf{israel} & \textbf{edu}     & \textbf{key}     & just   \\
& \textbf{time}   & \textbf{time}  & key             & \textbf{mail}    & line    & \textbf{like}   \\
& \textbf{just}   & \textbf{just}  & \textbf{like}   & \textbf{send}    & \textbf{like}    & \textbf{time}   \\
& \textbf{like}   & data           & \textbf{time}   & 3d      & \textbf{mail}    & \textbf{mail}   \\
& \textbf{image}  & com            & \textbf{just}   & com     & \textbf{format}  & \textbf{key}    \\
& \textbf{format} & \textbf{format}& line            & \textbf{objects} & \textbf{time}      & help   \\
& 3d              & gov            & \textbf{image}  & image   & \textbf{send}    & \textbf{edu}    \\
& ac              & help   & message                 & file    & \textbf{objects} & gov    \\
& available       & objects & mail                   & \textbf{format}  & message & \textbf{israel} \\
& based           & send    & help                  & stuff   & \textbf{israel}  & com \\\bottomrule
    \end{tabular}
    \caption{Comparison of the top 10 words from Topic 1 and Topic 2 for all three methods}
    \label{table:topic-table}
\end{table}

\section{Discussion}
\label{sec:discussion}
In this section, we summarize the main contributions of our work and discuss some limitations.
In this paper, we first provide a computationally efficient algorithm for maximum a-posteriori inference in Bayesian models that yields an optimal value.
The method sequentially adds optimal constraints to the fully relaxed dual problem using Benders' decomposition.
Second, we show when the algorithm yields an $\epsilon$-globally optimal value and when it yields a locally optimal value.
Third, we derive and implement our method for two Bayesian factor models (BGMM and LDA) with the incorporation of hard domain constraints to reflect the properties of the models.
Finally, for each model, we evaluate the algorithm on standard data sets and compare the results to other standard methods.
In terms of log MAP, our proposed method outperforms other competing methods for both Bayesian factor models on real-world data sets.
For BGMM, the Variation of Information results show that the final clustering provided by our method is similar to the suggested labels.
And, for the LDA model, the comparison of the top ten words assigned for both topics shows a high degree of similarity between our proposed method and other competing methods.

\par
There are a few important points to note when applying our method though.
Firstly, because we turn our problem into an optimization problem, it allows us to
add hard constraints appropriately. 
For example, we can force the final solution to have two data points being clustered into the same cluster, or into different clusters based on prior knowledge. 
This flexibility allows our method to become quite useful as it is not always easy to enforce such constraints using other standard methods. 
However, adding such constraints may limit the domain space in a way that the optimization solver runs less efficiently.
Secondly, GBD is not guaranteed to always provide the global optimum and when it provides a local optimum, the solution may be influenced by the initial points chosen. 
Thus, it is important to provide good initial points to start with for the proposed method to work best.
Finally, as is the case with all Bayesian mixture models, it is important to set appropriate values for parameters of the prior probability distributions for parameter estimation and inference on posterior probability distribution. 
In many cases, one can determine them with prior knowledge or based on common practices in literature. 
However, when neither is available, one would need to come up with some methodology to choose the prior parameter values that are appropriate for the model and for the data set.

\section*{Acknowledgments}
This work was funded by the National Science Foundation TRIPODS award 1934846.

\bibliographystyle{plain} 
\bibliography{references}  

\begin{thebibliography}{10}

\bibitem{andrieu2003introduction}
Christophe Andrieu, Nando De~Freitas, Arnaud Doucet, and Michael~I Jordan.
\newblock An introduction to {MCMC} for machine learning.
\newblock {\em Machine learning}, 50:5--43, 2003.

\bibitem{angel2022cluster}
Rico Angell, Nicholas Monath, Nishant Yadav, and Andrew Mc{C}allum.
\newblock Interactive correlation clustering with existential cluster constraints.
\newblock In {\em Proceedings of the 39th International Conference on Machine Learning}, volume 162 of {\em Proceedings of Machine Learning Research}, pages 703--716. PMLR, 17--23 Jul 2022.

\bibitem{asuncion2009smoothing}
Arthur Asuncion, Max Welling, Padhraic Smyth, and Yee~Whye Teh.
\newblock On smoothing and inference for topic models.
\newblock In {\em Proceedings of the Twenty-Fifth Conference on Uncertainty in Artificial Intelligence}, pages 27--34, 2009.

\bibitem{balkrishnan2017EM}
Sivaraman Balakrishnan, Martin~J. Wainwright, and Bin Yu.
\newblock Statistical guarantees for the {EM} algorithm: From population to sample-based analysis.
\newblock {\em The Annals of Statistics}, 45(1):77 -- 120, 2017.

\bibitem{baumann2020binary}
Philipp Baumann.
\newblock A binary linear programming-based {K}-means algorithm for clustering with must-link and cannot-link constraints.
\newblock In {\em 2020 IEEE International Conference on Industrial Engineering and Engineering Management (IEEM)}, pages 324--328, 2020.

\bibitem{bertsimas2017logistic}
Dimitris Bertsimas and Angela King.
\newblock Logistic regression: From art to science.
\newblock {\em Statistical Science}, 32(3):367--384, 2017.

\bibitem{bertsimas2005optimization}
Dimitris Bertsimas and Robert Weismantel.
\newblock {\em Optimization over integers}.
\newblock {Dynamic Ideas}, {Belmont}, 2005.

\bibitem{bishop2006pattern}
Christopher~M Bishop and Nasser~M Nasrabadi.
\newblock {\em Pattern recognition and machine learning}, volume~4.
\newblock Springer, 2006.

\bibitem{bisschop1982gams}
Johannes Bisschop and Alexander Meeraus.
\newblock On the development of a general algebraic modeling system in a strategic planning environment.
\newblock In {\em Applications}, volume~20 of {\em Mathematical Programming Studies}, pages 1--29. Springer Berlin Heidelberg, 1982.

\bibitem{blei2017variational}
David~M Blei, Alp Kucukelbir, and Jon~D McAuliffe.
\newblock Variational inference: A review for statisticians.
\newblock {\em Journal of the American statistical Association}, 112(518):859--877, 2017.

\bibitem{blei2003latent}
David~M Blei, Andrew~Y Ng, and Michael~I Jordan.
\newblock Latent dirichlet allocation.
\newblock {\em Journal of machine Learning research}, 3(Jan):993--1022, 2003.

\bibitem{brooks1998general}
Stephen~P Brooks and Andrew Gelman.
\newblock General methods for monitoring convergence of iterative simulations.
\newblock {\em Journal of computational and graphical statistics}, 7(4):434--455, 1998.

\bibitem{bruno2021auto}
Mattia Bruno.
\newblock {\em Auto-encoding variational {B}ayes}.
\newblock PhD thesis, Università degli Studi di Padova, 2021.

\bibitem{chib1995understanding}
Siddhartha Chib and Edward Greenberg.
\newblock Understanding the {M}etropolis-{H}astings algorithm.
\newblock {\em The American Statistician}, 49(4):327--335, 1995.

\bibitem{clausen1999branch}
Jens Clausen.
\newblock Branch and bound algorithms-principles and examples.
\newblock Technical report, Department of Computer Science, University of Copenhagen, 1999.

\bibitem{davidson2005clustering}
Ian Davidson and SS~Ravi.
\newblock Clustering with constraints: Feasibility issues and the k-means algorithm.
\newblock In {\em Proceedings of the 2005 SIAM international conference on data mining}, pages 138--149. SIAM, 2005.

\bibitem{dua2019uci}
Dheeru Dua and Casey Graff.
\newblock {UCI} machine learning repository, 2017.

\bibitem{duran1986outer}
Marco~A Duran and Ignacio~E Grossmann.
\newblock An outer-approximation algorithm for a class of mixed-integer nonlinear programs.
\newblock {\em Mathematical programming}, 36:307--339, 1986.

\bibitem{duran1984mixed}
Marco Duran-Pena.
\newblock {\em A mixed-integer nonlinear programming approach for the systematic synthesis of engineering systems}.
\newblock PhD thesis, Department of Chemical Engineering, Carnegie-Mellon University, 1984.

\bibitem{floudas2013deterministic}
Christodoulos~A Floudas.
\newblock {\em Deterministic global optimization: theory, methods and applications}, volume~37.
\newblock Springer Science \& Business Media, 2013.

\bibitem{gams2017manual}
{GAMS Development Corporation}.
\newblock {\em GAMS --- A User's Guide}.
\newblock GAMS Development Corporation, 2017.

\bibitem{gelman1992inference}
Andrew Gelman and Donald~B Rubin.
\newblock Inference from iterative simulation using multiple sequences.
\newblock {\em Statistical science}, 7(4):457--472, 1992.

\bibitem{geman1984gibbs}
Stuart Geman and Donald Geman.
\newblock Stochastic relaxation, gibbs distributions, and the bayesian restoration of images.
\newblock {\em IEEE Transactions on Pattern Analysis and Machine Intelligence}, PAMI-6(6):721--741, 1984.

\bibitem{geoffrion1972generalized}
Arthur~M. Geoffrion.
\newblock Generalized benders decomposition.
\newblock {\em Journal of Optimization Theory and Applications}, 10, 1972.

\bibitem{gilmore1961linear}
Paul~C Gilmore and Ralph~E Gomory.
\newblock A linear programming approach to the cutting-stock problem.
\newblock {\em Operations research}, 9(6):849--859, 1961.

\bibitem{goemans1995improved}
Michel~X Goemans and David~P Williamson.
\newblock Improved approximation algorithms for maximum cut and satisfiability problems using semidefinite programming.
\newblock {\em Journal of the ACM (JACM)}, 42(6):1115--1145, 1995.

\bibitem{griffiths2002}
Thomas Griffiths and Mark Steyvers.
\newblock Prediction and semantic association.
\newblock {\em Advances in Neural Information Processing Systems (NeurIPS)}, 2002.

\bibitem{hajela1990genetic}
Prabhat Hajela.
\newblock Genetic search-an approach to the nonconvex optimization problem.
\newblock {\em AIAA journal}, 28(7):1205--1210, 1990.

\bibitem{heinrich2005parameter}
Gregor Heinrich.
\newblock Parameter estimation for text analysis.
\newblock Technical report, Fraunhofer IGD, 2005.

\bibitem{ucikdd}
S.~Hettich and S.~D. Bay.
\newblock {UCI} kdd archive, 1999.

\bibitem{kronqvist2019review}
Jan Kronqvist, David~E. Bernal, Andreas Lundell, and Ignacio~E. Grossmann.
\newblock A review and comparison of solvers for convex {MINLP}.
\newblock {\em Optimization and Engineering}, 20(2):397--455, Jun 2019.

\bibitem{land1960automatic}
A.~H. Land and A.~G. Doig.
\newblock An automatic method of solving discrete programming problems.
\newblock {\em Econometrica}, 28(3):497, 1960.

\bibitem{meilua2003comparing}
Marina Meil{\u{a}}.
\newblock Comparing clusterings by the variation of information.
\newblock In {\em Learning Theory and Kernel Machines: 16th Annual Conference on Learning Theory and 7th Kernel Workshop, COLT/Kernel 2003, Washington, DC, USA, August 24-27, 2003. Proceedings}, pages 173--187. Springer, 2003.

\bibitem{murphy2012machine}
Kevin~P Murphy.
\newblock {\em Machine learning: a probabilistic perspective}.
\newblock MIT press, 2012.

\bibitem{neal2011mcmc}
Radford~M Neal et~al.
\newblock {MCMC} using {H}amiltonian dynamics.
\newblock {\em Handbook of {M}arkov {C}hain {M}onte {C}arlo}, 2(11):2, 2011.

\bibitem{papanikolaou2017dense}
Yannis Papanikolaou, James~R Foulds, Timothy~N Rubin, and Grigorios Tsoumakas.
\newblock Dense distributions from sparse samples: improved gibbs sampling parameter estimators for lda.
\newblock {\em The Journal of Machine Learning Research}, 18(1):2058--2115, 2017.

\bibitem{pati2018statistical}
Debdeep Pati, Anirban Bhattacharya, and Yun Yang.
\newblock On statistical optimality of variational bayes.
\newblock In {\em International Conference on Artificial Intelligence and Statistics}, pages 1579--1588. PMLR, 2018.

\bibitem{scikit-learn}
F.~Pedregosa, G.~Varoquaux, A.~Gramfort, V.~Michel, B.~Thirion, O.~Grisel, M.~Blondel, P.~Prettenhofer, R.~Weiss, V.~Dubourg, J.~Vanderplas, A.~Passos, D.~Cournapeau, M.~Brucher, M.~Perrot, and E.~Duchesnay.
\newblock Scikit-learn: Machine learning in {P}ython.
\newblock {\em Journal of Machine Learning Research}, 12:2825--2830, 2011.

\bibitem{roy2020convergence}
Vivekananda Roy.
\newblock Convergence diagnostics for markov chain monte carlo.
\newblock {\em Annual Review of Statistics and Its Application}, 7:387--412, 2020.

\bibitem{sahinidis2017baron}
N.~V. Sahinidis.
\newblock {\em {BARON 17.8.9: Global Optimization of Mixed-Integer Nonlinear Programs, {\em User's Manual}}}, 2017.

\bibitem{shimony1994finding}
Solomon~Eyal Shimony.
\newblock Finding maps for belief networks is {NP}-hard.
\newblock {\em Artificial intelligence}, 68(2):399--410, 1994.

\bibitem{sminchisescu2003mode}
Cristian Sminchisescu, Max Welling, and Geoffrey Hinton.
\newblock A mode-hopping {MCMC} sampler.
\newblock Technical Report CSRG-478, University of Toronto, 2003.

\bibitem{sontag2010introduction}
David Sontag, Amir Globerson, and Tommi Jaakkola.
\newblock Introduction to dual decomposition for inference.
\newblock In {\em Optimization for Machine Learning}, pages 219--254. MIT Press, 2012.

\bibitem{Sontag2008message}
David Sontag, Talya Meltzer, Amir Globerson, Yair Weiss, and Tommi Jaakkola.
\newblock Tightening {LP} relaxations for {MAP} using message-passing.
\newblock In {\em 24th Conference in Uncertainty in Artificial Intelligence}, pages 503--510. AUAI Press, 2008.

\bibitem{szu1987nonconvex}
H.H. Szu and R.L. Hartley.
\newblock Nonconvex optimization by fast simulated annealing.
\newblock {\em Proceedings of the IEEE}, 75(11):1538--1540, 1987.

\bibitem{tawarmalani2005polyhedral}
Mohit Tawarmalani and Nikolaos~V. Sahinidis.
\newblock A polyhedral branch-and-cut approach to global optimization.
\newblock {\em Mathematical Programming}, 103(2):225--249, 2005.

\bibitem{tran2021bayes}
Minh-Ngoc Tran, Trong-Nghia Nguyen, and Viet-Hung Dao.
\newblock A practical tutorial on variational bayes.
\newblock {\em arXiv preprint arXiv:2103.01327}, 2021.

\bibitem{wainwright2008graphical}
Martin~J Wainwright, Michael~I Jordan, et~al.
\newblock Graphical models, exponential families, and variational inference.
\newblock {\em Foundations and Trends{\textregistered} in Machine Learning}, 1(1--2):1--305, 2008.

\bibitem{wallach2009evaluation}
Hanna~M Wallach, Iain Murray, Ruslan Salakhutdinov, and David Mimno.
\newblock Evaluation methods for topic models.
\newblock In {\em Proceedings of the 26th annual international conference on machine learning}, pages 1105--1112, 2009.

\bibitem{wang2019frequentist}
Yixin Wang and David~M Blei.
\newblock Frequentist consistency of variational bayes.
\newblock {\em Journal of the American Statistical Association}, 114(527):1147--1161, 2019.

\bibitem{zymnis2008mixed}
Argyrios Zymnis, Stephen Boyd, and Dimitry Gorinevsky.
\newblock Mixed state estimation for a linear gaussian markov model.
\newblock In {\em 2008 47th IEEE Conference on Decision and Control}, pages 3219--3226. IEEE, 2008.

\end{thebibliography}

\clearpage
\appendix

\section{BGMM posterior density in a factor graph form}
\label{sec:bgmm_factor_form}

\paragraph{Term 1.}
The density function of $p(y | z, \mu, \Lambda)$ is
\begin{equation}
	p(y | z, \mu, \Lambda) = \prod_{i=1}^N \prod_{k=1}^K \left[\mathcal{N}(y_i; \mu_k, \Lambda_k^{-1})\right]^{z_{ik}}.
\end{equation}
The log-density function is
\begin{eqnarray*}
	\log p(y | z, \mu, \Lambda) &=& \sum_{i=1}^N \sum_{k=1}^K z_{ik} \left[ -\frac{D}{2} \log (2\pi) -\frac{1}{2} \log \det \Lambda_k^{-1} -\frac{1}{2} (y_i - \mu_k)^T \Lambda (y_i - \mu_k) \right] \\
	&+=&  \sum_{i=1}^N \sum_{k=1}^K z_{ik} \left[  \frac{1}{2} \log \det \Lambda_k -\frac{1}{2} y_i^T \Lambda_k y_i  + \mu_k^T \Lambda_k y_i  - \frac{1}{2} \mu_k^T \Lambda_k \mu_k \right] \\
	&=&  \frac{1}{2}  \sum_{i=1}^N \sum_{k=1}^K z_{ik}  \log \det \Lambda_k 
	- \frac{1}{2} \sum_{i=1}^N \sum_{k=1}^K z_{ik}  y_i^T \Lambda_k y_i 
	+  \sum_{i=1}^N \sum_{k=1}^K z_{ik} \mu_k^T \Lambda_k y_i  
	- \frac{1}{2} \sum_{i=1}^N \sum_{k=1}^K z_{ik} \mu_k^T \Lambda_k \mu_k
\end{eqnarray*}
The first factor, $z_{ik} \log \det \Lambda_k$, only involves the latent variables $x_{A_{ik}} = \{z_{ik}, \Lambda_k\}$.
Grouping the terms from the density that only involve those terms gives
\begin{equation}
	\theta_{A_{ik}} (x_{A_{ik}}) := \frac{1}{2} z_{ik} \left[ \log \det \Lambda_k - y_i^T \Lambda_k y_i \right].
\end{equation}
Similarly, the term $z_{ik} \mu_k^T \Lambda_k y_i$ involves latent variables $x_{B_{ik}} = \{z_{ik}, \mu_k, \Lambda_k\}$.
Grouping all of the terms that involve $x_{B_{ik}}$ gives
\begin{equation}
	\theta_{B_{ik}}(x_{B_{ik}}) := z_{ik} \left[ \mu_k^T \Lambda_k y_i - \frac{1}{2} \mu_k^T \Lambda_k \mu_k \right].
\end{equation}
Now, the density function can be written in factor form as
\begin{equation}
	\log p(y | z, \mu, \Lambda) = \sum_{f \in \mathcal{F}_1} \theta_f(x_f),
\end{equation}
where $\mathcal{F}_1 = \{A_{ik}, B_{ik}\}_{i=1,k=1}^{N,K}$.

\paragraph{Term 2.}

The log-density function of $p(z | \pi)$ is
\begin{equation}
	\log p(z | \pi) = \sum_{i=1}^N \sum_{k=1}^K z_{ik} \log \pi_k.
\end{equation}
The factors only involve latent variables $x_{C_{ik}} = \{z_{ik}, \pi_k\}$ and the factors are
\begin{equation}
	\theta_{C_{ik}}(x_{C_{ik}}) := z_{ik} \log \pi_k.
\end{equation}
The log-density function can then be written as
\begin{equation}
	\log p(z | \pi) = \sum_{f \in \mathcal{F}_2} \theta_f(x_f),
\end{equation}
where $\mathcal{F}_2 = \{C_{ik}\}_{i=1,k=1}^{N,K}$.

\paragraph{Term 3.}
The log-density function of $p(\pi | \alpha_0)$ is
\begin{equation}
	\log p(\pi | \alpha_0) = \log \Gamma(\sum_{k=1}^k \alpha_{0k}) - \sum_{k=1}^K \log \Gamma(\alpha_{0k}) + \sum_{k=1}^K (\alpha_{0k} - 1) \log \pi_k.
\end{equation}
The factors in this log-density only involve singleton latent variables, so they can be written as
\begin{equation}
	\theta_{\pi_k} = (\alpha_{0k} - 1) \log \pi_k
\end{equation}
and the log-density is
\begin{equation}
	\log p(\pi | \alpha_0) = \sum_{v \in \mathcal{V}_1} \theta_v(v),
\end{equation}
where $\mathcal{V}_1 = \{\pi_1, \ldots, \pi_K\}$.

\paragraph{Term 4.}
The log-density function is
\begin{eqnarray*}
	\log p(\mu | \Lambda; \beta_0) &=& \sum_{k=1}^K  \left[ -\frac{D}{2} \log (2\pi) -\frac{1}{2} \log \det (\beta_0 \Lambda_k)^{-1} -\frac{\beta_0}{2} (\mu_k - \mu_0)^T \Lambda (\mu_k - \mu_0) \right] \\
	&+=&  \sum_{k=1}^K  \left[  \frac{1}{2} \log \det (\beta_0 \Lambda_k) -\frac{\beta_0}{2} \mu_k^T \Lambda_k \mu_k  + \beta_0  \mu_k^T \Lambda_k \mu_0  - \frac{\beta_0}{2} \mu_0^T \Lambda_k \mu_0 \right] \\
	&=&  \frac{1}{2} \sum_{k=1}^K \log \det ( \beta_0 \Lambda_k )
	- \frac{\beta_0}{2} \sum_{k=1}^K  \mu_k^T \Lambda_k \mu_k 
	+  \beta_0 \sum_{k=1}^K  \mu_k^T \Lambda_k \mu_0
	- \frac{\beta_0}{2} \sum_{k=1}^K \mu_0^T \Lambda_k \mu_0.
\end{eqnarray*}
The first term only involves a single latent variable and the log-density can be reformulated by defining
\begin{equation}
	\theta_{\Lambda_k} := \frac{1}{2}\log \det ( \beta_0 \Lambda_k ) - \frac{\beta_0}{2} \sum_{k=1}^K \mu_0^T \Lambda_k \mu_0
\end{equation}
The second term involves $x_{D_k} = \{\mu_k, \Lambda_k\}$. 
Grouping all the terms involving those latent variables gives
 \begin{equation}
	\theta_{D_{k}}(x_{D_{k}}) := - \frac{\beta_0}{2}  \mu_k^T \Lambda_k \mu_k +  \beta_0  \mu_k^T \Lambda_k \mu_0.
 \end{equation}
The log-density function can then be written as
\begin{equation}
	\log p(\mu | \Lambda; \beta_0) =  \sum_{v \in \mathcal{V}_2} \theta_v(x_v) + \sum_{f \in \mathcal{F}_3} \theta_f(x_f),
\end{equation}
where $\mathcal{V}_2 =  \{\Lambda_1, \ldots, \Lambda_K\}$ and $\mathcal{F}_3 = \{D_{k}\}_{k=1}^{K}$.

\paragraph{Term 5.}
The log-density function of $p(\Lambda)$ is
\begin{equation*}
	\log p(\Lambda; W_0, \nu_0) = \sum_{k=1}^K \log B(W_0, \nu_0) + \frac{(\nu_0 - D - 1)}{2} \log \det \Lambda_k - \frac{1}{2} \Tr (W_0^{-1} \Lambda_k),
\end{equation*}
where
\begin{equation*}
	B(W_0, \nu_0) = \left[ \det W_0 ^{\nu_0/2}  2^{\nu_0 D / 2} \pi^{D (D-1)/4} \prod_{d=1}^D \Gamma \left( \frac{\nu_0 + 1 - d}{2} \right) \right]^{-1}.
\end{equation*}
The factors in this log-density only involve singleton latent variables, so they can be written as
\begin{equation}
	\theta_{\Lambda_k} += \frac{\nu_0 - D - 1 }{2}  \log \det \Lambda_k - \frac{1}{2} \Tr \left( W_0^{-1} \Lambda_k \right)
\end{equation}
and the log-density is
\begin{equation}
	\log p(\Lambda ; W_0, \nu_0) = \sum_{v \in \mathcal{V}_2} \theta_v(v),
\end{equation}
where $\mathcal{V}_2 = \{\Lambda_1, \ldots, \Lambda_K\}$.

\paragraph{Combining factors.}
Recall $\phi = (\alpha_0, W_0, \nu_0, \mu_0, \beta_0)$,\\ $x = (z_{11}, \ldots, z_{NK}, \mu_1, \ldots, \mu_K, \Lambda_1, \ldots, \Lambda_K, \pi_1, \ldots, \pi_K)$, and $y = (y_1, \ldots, y_N)$.
Combining the factorized representations of each of the terms in the joint log-density gives
\begin{equation}
	p(x | y, \phi) = f(x,\theta) = \sum_{v \in \mathcal{V}} \theta_v(x_v) + \sum_{f \in \mathcal{F}} \theta_f(x_f),
\end{equation}
where $\mathcal{V} =  \{z_{11}, \ldots, z_{NK}, \mu_1, \ldots, \mu_K, \Lambda_1, \ldots, \Lambda_K, \pi_1, \ldots, \pi_K\}$, and \\ $\mathcal{F} = \{[A_{ik}]_{k=1, i=1}^{K,N}, [B_{ik}]_{k=1, i=1}^{K,N}, [C_{ik}]_{k=1, i=1}^{K,N}, [D_{k}]_{k=1}^{K}\}$.
The factors are
\begin{eqnarray}
	\theta_{\pi_k}(\pi_k) &=& (\alpha_{0k} - 1) \log \pi_k \label{eq:term1}, \\
	\theta_{\Lambda_k}(\Lambda_k) &=& \frac{1}{2} \log \det ( \beta_0 \Lambda_k ) - \frac{\beta_0}{2} \sum_{k=1}^K \mu_0^T \Lambda_k \mu_0 \nonumber \\
	&& \quad + \frac{\nu_0 - D - 1 }{2} \log \det \Lambda_k - \frac{1}{2} \Tr \left( W_0^{-1} \Lambda_k \right) \label{eq:term2}, \\
	\theta_{A_{ik}} (x_{A_{ik}}) &=& \frac{1}{2} z_{ik} \left[ \log \det \Lambda_k - y_i^T \Lambda_k y_i \right] \label{eq:term3}, \\
	\theta_{B_{ik}}(x_{B_{ik}}) &=& z_{ik} \left[ \mu_k^T \Lambda_k y_i - \frac{1}{2} \mu_k^T \Lambda_k \mu_k \right] \label{eq:term4}, \\
	\theta_{C_{ik}}(x_{C_{ik}}) &=& z_{ik} \log \pi_k \label{eq:term5}, \\
	\theta_{D_{k}}(x_{D_{k}}) &=& - \frac{\beta_0}{2}  \mu_k^T \Lambda_k \mu_k +  \beta_0  \mu_k^T \Lambda_k \mu_0 \label{eq:term6}.
\end{eqnarray}
The non-singleton factors are $\theta_{A_{ik}} (x_{A_{ik}}), \theta_{B_{ik}}(x_{B_{ik}}), \theta_{C_{ik}}(x_{C_{ik}})$ and $\theta_{D_{k}}(x_{D_{k}})$.
There are $|\mathcal{F}| = 3NK+K$ non-singleton factors.
The dimension of the posterior space is $|x| = NK+3K$.

\section{BGMM in a factor graph form does not satisfy the conditions for Benders' decomposition}
\label{sec:bgmm-proof}

Recall the MAP inference problem for BGMM in factor graphs is
\begin{eqnarray*}
	\text{MAP}(\theta) &=& \max_{\bx, \bx^\mathcal{F}} \sum_{v \in \mathcal{V}} \theta_v(x_v) + \sum_{f \in \mathcal{F}} \sum_{v \in f} \theta_f(\bx_v^f) \\
	\text{subject to} && x_v = x_v^f,\quad \forall v \in f, f \in \mathcal{F},
\end{eqnarray*}
and generalized Bender's decomposition solves problems of the form
\begin{eqnarray*}
	\max_{x,y} & f(x,y) \\
	\text{subject to } & G(x,y) \geq 0, \quad x \in \mathcal{X}, y \in \mathcal{Y},
\end{eqnarray*}%
where the following conditions on the objective and constraints hold:
\begin{enumerate}
	\item for a fixed $y$, $f(x,y)$ separates into independent optimization problems each involving a different subvector of $x$,
	\item for a fixed $y$, $f(x,y)$ is of a special structure that can be solved efficiently, and 
	\item fixing $y$ renders the optimization problem concave in $x$.
\end{enumerate}

Let $x^\calF$ and $\bx$ in the MAP inference problem be $x$ and $y$ in the generalized Benders' decomposition problem respectively.

Assume that $\bx$ is fixed. If $\bx$ is fixed, the Hessian matrix, $\bf{H}$, is \\
\begin{center}
$\begin{pNiceMatrix}[first-row,first-col]
    & z_{ik}^{A_{ik}} & \Lambda_k^{A_{ik}} 
    & z_{ik}^{B_{ik}} & \mu_k^{B_{ik}} & \Lambda_k^{B_{ik}} 
    & z_{ik}^{C_{ik}} & \pi_k^{C_{ik}} 
    & \mu_k^{D_{ik}} & \Lambda_k^{D_{ik}} \\
z_{ik}^{A_{ik}}    & 0   & (1) & 0   & 0   & 0   & 0   & 0   & 0   & 0 \\
\Lambda_k^{A_{ik}} & (1) & (2) & 0   & 0   & 0   & 0   & 0   & 0   & 0 \\
z_{ik}^{B_{ik}}    & 0   & 0   & 0   & (3) & (5) & 0   & 0   & 0   & 0 \\
\mu_k^{B_{ik}}     & 0   & 0   & (3) & (4) & (6) & 0   & 0   & 0   & 0 \\
\Lambda_k^{B_{ik}} & 0   & 0   & (5) & (6) & 0   & 0   & 0   & 0   & 0 \\
z_{ik}^{C_{ik}}    & 0   & 0   & 0   & 0   & 0   & 0   & (7) & 0   & 0 \\
\pi_k^{C_{ik}}     & 0   & 0   & 0   & 0   & 0   & (7) & (8) & 0   & 0 \\
\mu_k^{D_{ik}}    & 0   & 0   & 0   & 0   & 0   & 0   & 0   & (9) & (10)\\
\Lambda_k^{D_{ik}} & 0   & 0   & 0   & 0   & 0   & 0   & 0   & (10)& 0 \\
\end{pNiceMatrix}$, where
\end{center}

\begin{eqnarray*}
	(1) = \frac{1}{2}\sum_{i=1}^{N}\sum_{k=1}^{K}\log\Tr(\textrm{adj} (\Lambda_k^{A_{ik}})) - \Tr(y_i y_i^T) &;& 
	(2) = \frac{\partial}
	    {\partial \Lambda_k^{A_{ik}}} \frac{1}{2}\sum_{i=1}^{N}\sum_{k=1}^{K}z_{ik}^{A_{ik}}(\log\Tr(\textrm{adj}( \Lambda_k^{A_{ik}})) - \Tr(y_i y_i^T)) \\
	(3) = \sum_{i=1}^{N}\sum_{k=1}^{K} \Lambda_k^{B_{ik}}y_i-{\mu_k^{B_{ik}}}^T\Lambda_k^{B_{ik}} &;&
	(4) = -\sum_{i=1}^{N}\sum_{k=1}^{K} z_{ik}^{B_{ik}}\Lambda_k^{B_{ik}}\\
	(5) = \sum_{i=1}^{N}\sum_{k=1}^{K} {\mu_k^{B_{ik}}}^Ty_i-\frac{1}{2}\Tr({\mu_k^{B_{ik}}}{\mu_k^{B_{ik}}}^T) &;&
	(6) = \sum_{i=1}^{N}\sum_{k=1}^{K} z_{ik}^{B_{ik}}(y_i-\mu_k^{B_{ik}})\\
	(7) = \sum_{i=1}^{N}\sum_{k=1}^{K} \frac{1}{\pi_k^{C_{ik}} } &;&
	(8) = -\sum_{i=1}^{N}\sum_{k=1}^{K} \frac{z_{ik}^{C_{ik}}}{\pi_k^{C_{ik}}}^2\\
	(9) = -\beta_0\sum_{k=1}^{K} \Lambda_k^{D_{ik}} &;&
	(10) = -\beta_0\sum_{k=1}^{K} \mu_k^{D_{ik}} - \mu_0,
\end{eqnarray*}
where $\textrm{adj}(A)$ is the adjugate of the matrix $A$.

Considering $\bf{H}$ as a block matrix, the eigenvalues of $\bf{H}$ are collections of each block's eigenvalues. Consider the first block as
\begin{center}
$\bf{H}_1=\{\bf{H}_{ij}: i \in {1,2}, j \in {1,2}\}=$
$\begin{bmatrix}
 0   & (1) \\
 (1) & (2) \\
\end{bmatrix}$.
\end{center}
Then the characteristic equation is
\begin{center}
$| \bf{H}_1-\lambda \bf{I}| = \left|
\begin{bmatrix}
 0   & (1) \\
 (1) & (2) \\
\end{bmatrix} - 
\begin{bmatrix}
 \lambda   & 0 \\
 0 & \lambda \\
\end{bmatrix} \right| = 
\left| \begin{bmatrix}
 -\lambda   & (1) \\
 (1) & (2)-\lambda \\
\end{bmatrix} \right| =
\lambda^2 -(2)\lambda -(1)^2 = 0$,
\end{center}
and the two eigenvalues are $\frac{(2)\pm\sqrt{(2)^2+4(1)^2}}{2}$. Because one of the eigenvalues, $\frac{(2)+\sqrt{(2)^2+4(1)^2}}{2}$, is always greater than 0, $\bf{H}$ is not negative definite and thus this violates the condition 3 of the generalized Bender's decomposition.

\section{LDA posterior density in a factor graph form}
\label{sec:lda_factor_form}

\paragraph{Term 1.}
The log-density function of $p(w | z, \beta)$ is
$\log p(w |  z, \beta) = \sum_{d=1}^M \sum_{n=1}^N \sum_{v=1}^V w_{dnv}\log\beta_{z_{dn}}$,
where $w_{dnv}$ is a binary representation of $v$-categorical variable of $w_{dn}$.

The term involves latent variables $x_{A_{dnk}} = \{z_{dn}, \beta_k\}$ and the factors are
$\theta_{A_{dnk}}(x_{A_{dnk}}) := w_{dn}\log\beta_{z_{dn}}$.
and the log-density can be written as
$\log p(w | z, \beta) = \sum_{f \in \mathcal{F}_1} \theta_f(x_f)$,
where $\mathcal{F}_1 = \{A_{dnk}\}_{d=1,n=1,k=1}^{M,N,K}$.

\paragraph{Term 2.}
The log-density function of $p(z|\Theta)$ is
$\log p(z|\Theta) = \sum_{d=1}^M \sum_{n=1}^N \sum_{k=1}^K z_{dnk}\log\Theta_{dk}$
where $z_{dnk}$ is a binary representation of $k$-categorical variable of $z_{dn}$.
The term involves latent variables $x_{B_{dnk}} = \{\Theta_{dk}, {z_{dnk}}\}$ and the factors are
$\theta_{B_{dnk}}(x_{B_{dnk}}) := z_{dnk} \log \Theta_{dk}$.
The log-density function can then be written as
$\log p(z | \Theta) = \sum_{f \in \mathcal{F}_2} \theta_f(x_f),$
where $\mathcal{F}_2 = \{B_{dnk}\}_{d=1,n=1,k=1}^{M,N,K}$.

\paragraph{Term 3.}
The log-density function of $p(\Theta ; \alpha_0)$ is
$\log p(\Theta ; \alpha_0) \propto \sum_{d=1}^M\sum_{k=1}^K (\alpha_{0k} - 1) \log \Theta_{dk}$.
The factors in this log-density only involve singleton latent variables, so they can be written as
$\theta_{\Theta_{dk}} = (\alpha_{0k} - 1) \log \Theta_{dk}$
and the log-density is
$\log p(\Theta ; \alpha_0) = \sum_{v \in \mathcal{V}_1} \theta_v(v)$,
where $\mathcal{V}_1 = \{\Theta_{11}, \ldots, \Theta_{MK}\}$.

\paragraph{Term 4.}
The log-density function of $p(\beta ; \eta_0)$ is
$\log p(\beta ; \eta_0) \propto \sum_{k=1}^K\sum_{v=1}^V (\eta_{0v} - 1) \log \beta_{kv}$.
The factors in this log-density only involve singleton latent variables, so they can be written as
$\theta_{\beta_{kv}} = (\eta_{0v} - 1) \log \beta_{kv}$
and the log-density is
$\log p(\beta ; \eta_0) = \sum_{v \in \mathcal{V}_2} \theta_v(v)$,
where $\mathcal{V}_2 = \{\beta_{11}, \ldots, \beta_{KV}\}$.

\paragraph{Combining factors}
Recall $\phi = (\alpha_0, \eta_0)$, $x = (z_{11}, \ldots, z_{MN}, \Theta_1, \ldots, \Theta_M, \beta_1, \ldots, \beta_K)$, and $w = (w_{11}, \ldots, w_{MN})$.
Combining the factorized representations of each of the terms in the joint log-density gives
\begin{equation}
	p(x | w, \phi) = f(x,\theta) = \sum_{v \in \mathcal{V}} \theta_v(x_v) + \sum_{f \in \mathcal{F}} \theta_f(x_f),
\end{equation}
where $\mathcal{V} =  \{\Theta_{11}, \ldots, \Theta_{MK}, \beta_{11}, \ldots, \beta_{KV}\}$, and $\mathcal{F} = \{[A_{dnk}]_{d=1, n=1, k=1}^{M,N,K}\, [B_{dnk}]_{d=1,n=1,k=1}^{M,N,K}\}$.
The factors are
\begin{eqnarray}
    \theta_{\beta_{kv}} &=& (\eta_{0v} - 1) \log \beta_{kv} \label{eq:term1_lda}, \\
	\theta_{\Theta_{dk}}(\Theta_{dk})&=& (\alpha_{0k} - 1) \log \Theta_{dk} \label{eq:term2_lda}, \\
	\theta_{A_{dnk}}(x_{A_{dnk}}) &=& w_{dn}\log\beta_{z_{dn}} \label{eq:term3_lda}, \\
	\theta_{B_{dnk}}(x_{B_{dnk}}) &=& z_{dnk} \log \Theta_{dk} \label{eq:term4_lda}.
\end{eqnarray}

\section{LDA in a factor graph form does not satisfy the conditions for Benders' decomposition}
\label{sec:lda-proof}

Recall the MAP inference problem for BGMM in factor graphs is
\begin{eqnarray*}
	\text{MAP}(\theta) &=& \max_{\bx, \bx^\mathcal{F}} \sum_{v \in \mathcal{V}} \theta_v(x_v) + \sum_{f \in \mathcal{F}} \sum_{v \in f} \theta_f(\bx_v^f) \\
	\text{subject to} && x_v = x_v^f,\quad \forall v \in f, f \in \mathcal{F},
\end{eqnarray*}
and generalized Bender's decomposition solves problems of the form
\begin{eqnarray*}
	\max_{x,y} & f(x,y) \\
	\text{subject to} & G(x,y) \geq 0, \quad x \in \mathcal{X}, y \in \mathcal{Y},
\end{eqnarray*}
where the following conditions on the objective and constraints hold:
\begin{enumerate}
	\item for a fixed $y$, $f(x,y)$ separates into independent optimization problems each involving a different subvector of $x$,
	\item for a fixed $y$, $f(x,y)$ is of a special structure that can be solved efficiently, and 
	\item fixing $y$ renders the optimization problem concave in $x$.
\end{enumerate}

Let $x^\calF$ and $\bx$ in the MAP inference problem be $x$ and $y$ in the generalized Benders' decomposition problem respectively.

Assume that $\bx$ is fixed. If $\bx$ is fixed, the Hessian matrix, $\bf{H}$, is \\
\begin{center}
$\begin{pNiceMatrix}[first-row,first-col]
   & z_{dn}^{A_{dnk}} & \beta_{k}^{A_{dnk}} & \Theta_{dk}^{B_{dnk}} & z_{dnk}^{B_{dnk}}  \\
z_{dn}^{A_{dnk}}        & (1) & (2) & 0 & 0 \\
\beta_{k}^{A_{dnk}}     & (2) & (3) & 0 & 0 \\
\Theta_{dk}^{B_{dnk}}   & 0 & 0 & (4) & (5) \\
z_{dnk}^{B_{dnk}}       & 0 & 0 & (5) & 0
\end{pNiceMatrix}$, where
\end{center}

\begin{eqnarray*}
    (1) &=& \sum_{d=1}^{M}\sum_{n=1}^{N}\sum_{v=1}^{V} w_{dnv}[\frac{z_{dn}''}{\beta_{z_{dn}}}-(z_{dn}')^2\beta_{z_{dn}}^{-2}] 
    \quad \text{(For simple notation '$A_{dnk}$' is omitted)}\\\\
    (2) &=& -\sum_{d=1}^{M}\sum_{n=1}^{N}\sum_{v=1}^{V} w_{dnv}z_{dn}'\beta_{z_{dn}}^{-2}
    \quad \text{(For simple notation '$A_{dnk}$' is omitted)}\\
    (3) &=& -\sum_{d=1}^{M}\sum_{n=1}^{N}\sum_{v=1}^{V} w_{dnv}\beta_{z_{dn}}^{-2}
    \quad \text{(For simple notation '$A_{dnk}$' is omitted)}\\
	(4) &=&-\sum_{d=1}^{M}\sum_{n=1}^{N}\sum_{k=1}^{K} \frac{z_{dnk}^{B_{dnk}} }{{\Theta_{dk}^{B_{dnk}}}^2} \\
	(5) &=& \sum_{d=1}^{M}\sum_{n=1}^{N}\sum_{k=1}^{K} \frac{1}{\Theta_{dk}^{B_{dnk}}}.
\end{eqnarray*}

Considering $\bf{H}$ as a block matrix, the eigenvalues of $\bf{H}$ are collections of each block's eigenvalues. Consider the second block as
\begin{center}
$\bf{H}_2= \{\bf{H}_{ij}: i \in {3,4}, j \in {3,4}\}=$
$\begin{bmatrix}
 (4) & (5) \\
 (5) & 0 \\
\end{bmatrix}$.
\end{center}
Then the characteristic equation is
\begin{center}
$| \bf{H}_2-\lambda \bf{I}| = \left|
\begin{bmatrix}
 (4) & (5) \\
 (5) & 0 \\
\end{bmatrix} - 
\begin{bmatrix}
 \lambda   & 0 \\
 0 & \lambda \\
\end{bmatrix} \right| = 
\left| \begin{bmatrix}
 (4)-\lambda   & (5) \\
 (5) & -\lambda \\
\end{bmatrix} \right| =
\lambda^2 -(4)\lambda -(5)^2 = 0$,
\end{center}
and the two eigenvalues are $\frac{(4)\pm\sqrt{(4)^2+4(5)^2}}{2}$. Because one of the eigenvalues, $\frac{(4)+\sqrt{(4)^2+4(5)^2}}{2}$, is always greater than 0, $\bf{H}$ is not negative definite and thus this violates the condition 3 of the generalized Bender's decomposition.

\section{Running time and Optimality for the BGMM}
\label{sec:runopt-appendix}
Table ~\ref{table:runopt-table} shows the running time and optimality guarantees for all three methods under consideration. 
The running time of the three methods shows a stark difference between them across all three data sets. 
variational Bayes is the fastest as expected and the Gibbs sampler, which is run for 1,000 iterations, is also quicker than GBD mainly because the number of data points in consideration is not very large. 
Our proposed method, while not as quick as some of the other methods presented, still performs the best in terms of log MAP and most importantly guarantees optimality which the other two methods don't.

\begin{table}
    \begin{subtable}[b]{\textwidth}
        \centering
        \begin{tabular}{@{}lcc@{}}
            \toprule
            & Running time (s) & Optimality Guaranteed \\
            \midrule
            Var Bayes & 0.064 & (N/A) \\
            Sampler & 74.250 & (N/A) \\
            GBD & 4074.856 & epoch2, iter6 \\
            \bottomrule
        \end{tabular}
        \caption{\texttt{iris}}
    \end{subtable}\\[10pt]
    \begin{subtable}[b]{\textwidth}
        \centering
        \begin{tabular}{@{}lcc@{}}
            \toprule
            & Running time (s) & Optimality Guaranteed  \\
            \midrule
            Var Bayes  & 0.017 & (N/A) \\
            Sampler  & 94.99 & (N/A) \\
            GBD  & 3238.427 & epoch4, iter4 \\
            \bottomrule
        \end{tabular}
        \caption{\texttt{wine}}
    \end{subtable}\\[10pt]
    \begin{subtable}[b]{\textwidth}
        \centering
        \begin{tabular}{@{}lcc@{}}
            \toprule
            & Running time (s) & Optimality Guaranteed  \\
            \midrule
            Var Bayes  & 0.044 & (N/A) \\
            Sampler  & 191.504 & (N/A) \\
            GBD & 9837.053 & epoch3, iter3 \\
            \bottomrule
        \end{tabular}
        \caption{\texttt{brca}}
    \end{subtable}
    \caption{Comparison of clustering methods by running time and optimality guarantee.}
    \label{table:runopt-table}
\end{table}

\section{Perplexity analysis for LDA}
\label{sec:perp-appendix}
The typical way to compare multiple LDA models involves assessing the probability of a set of held-out test documents, given an already trained LDA model \citep{wallach2009evaluation}. 
In our context, given estimates of $\beta_{k}$ (the distribution of words in topic $k$) and $\Theta_d$ (distribution of topics in document $d$), we calculate the posterior predictive likelihood of all words in the test set \citep{papanikolaou2017dense}. Given a set of test documents $D_{test}$ and an estimated model $(\beta, \Theta)$, the \textit{log-likelihood} is defined as \citep{heinrich2005parameter}:
$$l_{D_\textrm{test}}(\beta, \Theta) := \sum_{d=1}^{D_\textrm{test}} \log p(\mathbf{w}_{d}|\beta, \Theta) = \sum_{d=1}^{D_\textrm{test}}\sum_{i=1}^{N_{d}} \log \sum_{k=1}^{K}\beta_{kv} \Theta_{dk} $$
with $w_{di} = v$. 
Hence, the perplexity is
$$\textrm{perplexity} := \exp \left( -\frac{l_{D_\textrm{test}}(\beta,\Theta)}{\sum_{d=1}^{D_\textrm{test}} N_{d}} \right), $$
where lower values of perplexity signify a better model. In this case, $\mathbf{w}_d$ indicates the vector of word assignments in a document $d$, $v$ is a word type, and $N_d$ is the number of word tokens in document $d$. Important to note here is the fact that $\Theta_{d}$ is unknown for the set of test documents, $D_\textrm{test}$. Hence, following the methodology developed by \citep{asuncion2009smoothing}, we run the methods under consideration on the first half of each test document and then compute the perplexity of the held-out data (the second half of each document) based on the trained model's posterior predictive distribution. \\
For this experiment, we randomly sample 300 documents from the 20 newsgroups data set and run all methods under consideration on a list of 5 topics and 150 most frequently used words within the selected data.
Next, we assign ninety percent of the documents to our training set and ten percent to the test set.
Table \ref{table:perplex-table} shows the log-likelihood and the perplexity values for those thirty selected test documents, i.e. $D_{test} = 30$. 
For the LDA model, we observe that all three methods under consideration show similar performance. The log-likelihood and perplexity scores for each method are fairly close to one another which is also reflected in Table \ref{table:topic-table} where the top ten words assigned to two topics shows a high degree of similarity between the three methods under consideration.

\begin{table}
        \centering
        \begin{tabular}{@{}lcc@{}}
            \toprule
            & Log Likelihood & Perplexity\\
            \midrule
            Var Bayes & -2237.730 & 177.670 \\
            Sampler & -2220.319 & 170.652 \\
            GBD & -2271.184 & 191.976 \\
            \bottomrule
        \end{tabular}
        \caption{Comparison of variational Bayes, Gibbs sampler, and GBD using log-likelihood and perplexity}
        \label{table:perplex-table}
\end{table}

\section{Convergence Analysis for the BGMM Gibbs Sampler}
\label{sec:gibbs-appendix}
In this section, we present a convergence analysis for the Gibbs sampler of the BGMM model. 
We ran our sampler for 1,000 iterations with a burn-in period of 100 iterations. 
In figure \ref{fig:gibbs_pi} we observe that the sampled mixture weights for all three standard data sets converge very quickly over 1,000 iterations. 
Hence, to find the mode of the distribution over these 1,000 iterations, we multiply the sampled weights by a factor to ensure that we have integer values and then take the mode over those integer values.
Similarly, we observe in figures \ref{fig:gibbs_mu_iris}, \ref{fig:gibbs_mu_wine}, and \ref{fig:gibbs_mu_brca} that the sampled means for each component across all three data sets converge very quickly over 1,000 iterations.
In this case as well, to find the mode of the distribution over these 1,000 iterations, we multiply the sampled mean components by a factor to ensure that we have integer values and then take the mode over those integer values. 

\begin{figure}
     \centering
     \begin{subfigure}[b]{0.28\textwidth}
         \centering
         \includegraphics[width=\textwidth]{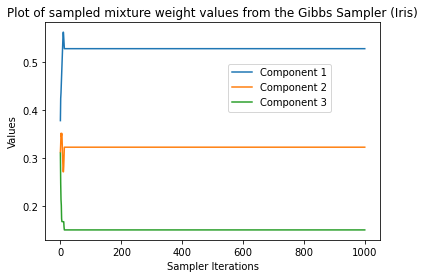}
         \caption{\texttt{iris}}
         \label{fig:iris_pi}
     \end{subfigure} %
     \begin{subfigure}[b]{0.28\textwidth}
         \centering
         \includegraphics[width=\textwidth]{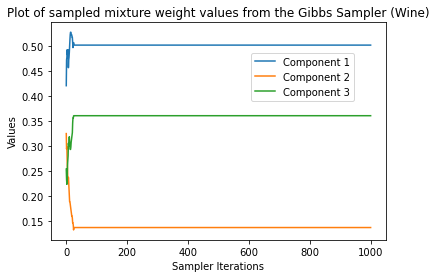}
         \caption{\texttt{wine}}
         \label{fig:wine_pi}
     \end{subfigure} %
     \begin{subfigure}[b]{0.28\textwidth}
         \centering
         \includegraphics[width=\textwidth]{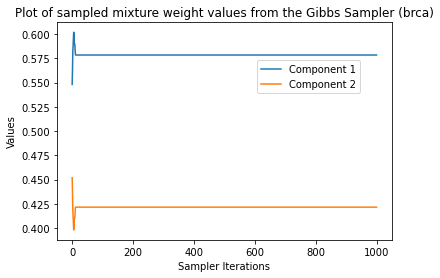}
         \caption{\texttt{brca}}
         \label{fig:brca_pi}
     \end{subfigure}
        \caption{Trace plots of sampled mixture weights from the Gibbs sampler for all three data sets}
        \label{fig:gibbs_pi}
\end{figure}

\begin{figure}[h!]
  \centering
    \includegraphics[width=0.72\textwidth]{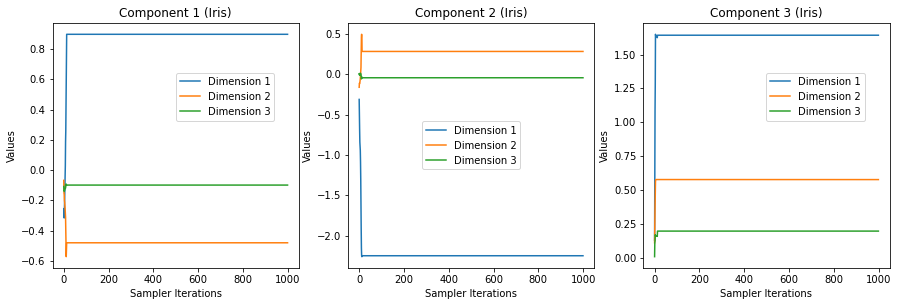}
  \caption{Trace plots of sampled mean components from the Gibbs sampler for \texttt{iris}}
  \label{fig:gibbs_mu_iris}
\end{figure}

\begin{figure}[h!]
  \centering
    \includegraphics[width=0.72\textwidth]{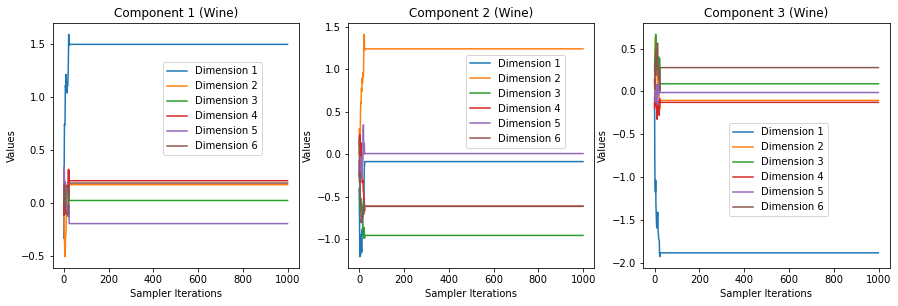}
  \caption{Trace plots of sampled mean components from the Gibbs sampler for \texttt{wine}}
  \label{fig:gibbs_mu_wine}
\end{figure}

\begin{figure}[h!]
  \centering
    \includegraphics[width=0.72\textwidth]{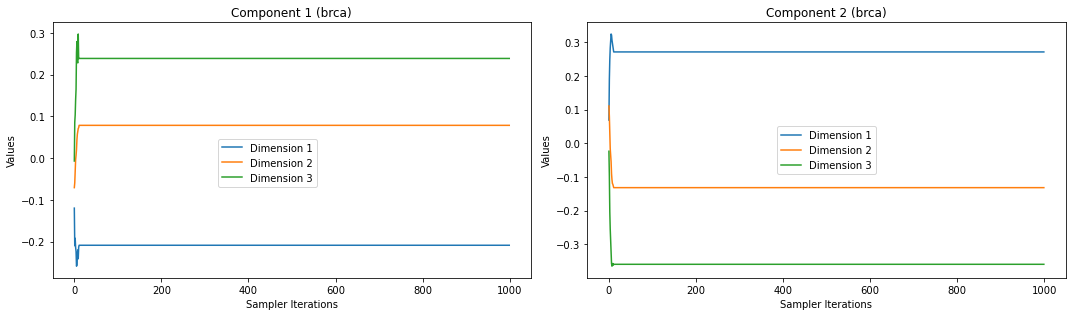}
  \caption{Trace plots of sampled mean components from the Gibbs Sampler for \texttt{brca}}
  \label{fig:gibbs_mu_brca}
\end{figure}

\end{document}